\documentclass{article}

\usepackage{microtype}
\usepackage{graphicx}
\usepackage{subcaption}
\usepackage{booktabs}
\usepackage{hyperref}
\usepackage{multirow} % for "interpretability_main"

\usepackage{amsmath,amssymb,mathtools,amsthm}
\usepackage[capitalize,noabbrev]{cleveref}

\usepackage[accepted]{icml2026}

\usepackage{xcolor}

\theoremstyle{plain}
\newtheorem{theorem}{Theorem}[section]
\newtheorem{proposition}[theorem]{Proposition}

\theoremstyle{definition}
\newtheorem{definition}[theorem]{Definition}

\theoremstyle{remark}
\newtheorem{remark}[theorem]{Remark}

\usepackage[textsize=tiny]{todonotes}

\icmltitlerunning{Bayesian Gated Non-Negative Contrastive Learning}

\begin{document}

\twocolumn[
  \icmltitle{Bayesian Gated Non-Negative Contrastive Learning}

  \icmlsetsymbol{equal}{*}

  \begin{icmlauthorlist}
    \icmlauthor{Peng Cui}{equal,1}
    \icmlauthor{Jiahao Zhang}{equal,1}
    \icmlauthor{Lijie Hu}{1}
  \end{icmlauthorlist}

  \icmlaffiliation{1}{Mohamed bin Zayed University of Artificial Intelligence (MBZUAI)}
  \icmlcorrespondingauthor{Lijie Hu}{lijie.hu@mbzuai.ac.ae}

  \icmlkeywords{Machine Learning, ICML}

  \vskip 0.3in
]

% Required even if empty
%\printAffiliationsAndNotice{}
\printAffiliationsAndNotice{\icmlEqualContribution}

\begin{abstract}
While Contrastive Learning (CL) has revolutionized self-supervised representation learning, its latent representations remain highly entangled and opaque, limiting their interpretability in safety-critical applications. 
% 2
We identify that a fundamental cause of this entanglement is the reliance on deterministic similarity measures, which treat all feature dimensions equally. 
% 3
In compositional scenes, this creates an Optimization Conflict: common background features, such as, ``blue sky", are encouraged to align in positive pairs but simultaneously repelled in negative pairs, causing gradient oscillations that hinder precise semantic disentanglement. 
% 4
To address this, we propose \textbf{BayesNCL} (Bayesian Gated Non-Negative Contrastive Learning). 
% 5
Unlike standard approaches, BayesNCL introduces a probabilistic gating mechanism that dynamically filters out task-irrelevant, high-frequency common features while selectively retaining discriminative semantics. 
% 6
By formalizing feature selection as a variational inference problem with a sparse Bernoulli prior, our method effectively resolves the optimization conflict. 
% 7
Empirical experimental results on Imagenet-100 demonstrate that BayesNCL achieves a remarkable 142.1\% improvement in semantic consistency compared to state-of-the-art baselines, yielding highly interpretable representations without compromising downstream task performance.
% 8
Code is available at \href{https://github.com/Cui-Peng-624/BayesNCL}{https://github.com/Cui-Peng-624/BayesNCL}.

\end{abstract}

\section{Introduction}
\label{sec: introduction}

\begin{figure}[ht]
    \centering
    \includegraphics[width=1.0\linewidth]{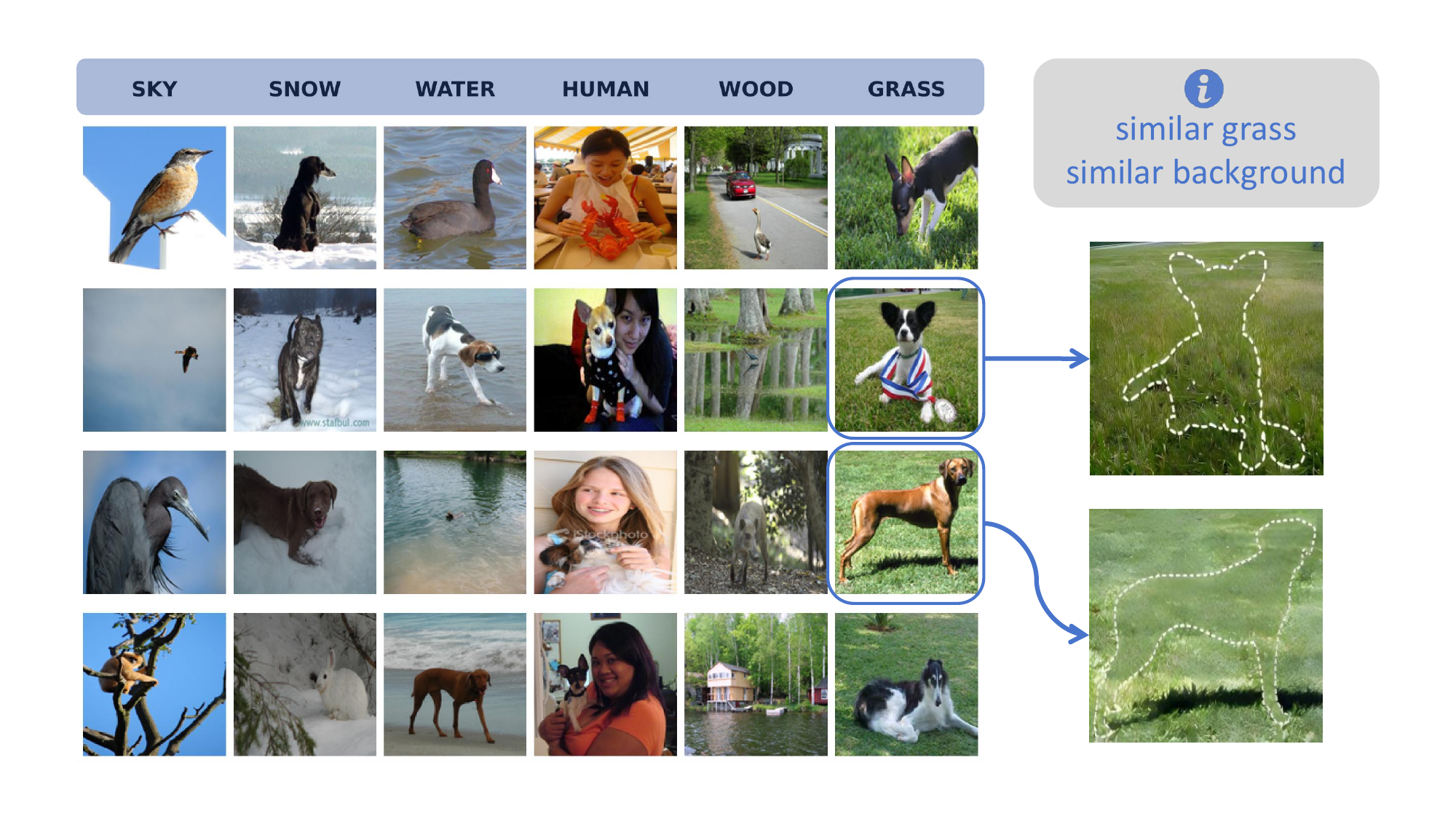} 
    \caption{\textbf{The Compositionality Gap.} Real-world image features are combined. For example, we found that on Imagenet-100, many images have similar background features.}
    \label{fig: the compositionality gap}
\end{figure}

% ====== 第一段 ======
% 1
In recent years, self-supervised learning (SSL)~\citep{jing2020self}, especially contrastive learning (CL)~\citep{jaiswal2020survey}, has emerged as a powerful paradigm for learning generalizable visual representations from unlabeled data. 
% 2
By optimizing instance discrimination objectives that pull augmented views of the same image together while pushing distinct images apart, CL effectively encodes high-level semantic information into compact feature vectors and demonstrates strong transferability across downstream tasks~\citep{wu2018unsupervised, chen2020simple, he2020momentum, ericsson2021well}.
% 3 [address “What is standard CL?” in the second paragraph]
Technically, these methods typically map inputs into a continuous, real-valued latent space, where feature dimensions are dense and allow for arbitrary rotation to maximize separability on a hypersphere~\citep{wang2020understanding}.

% ====== 第二段 ======
% 1
Although contrastive learning has shown strong potential through its competitive performance, the standard contrastive paradigm still faces several hurdles in practice.
% 2
% Early challenges, such as the reliance on massive negative pairs and the risk of dimensional collapse, have been extensively explored and largely mitigated through innovations like memory banks~\citep{he2020momentum, chen2020improved}, negative-free architectures~\citep{grill2020bootstrap, tian2021understanding}, and feature decorrelation objectives~\citep{zbontar2021barlow, bardes2021vicreg}. 
Early challenges, such as the reliance on massive negative pairs and the risk of dimensional collapse, have been extensively explored and largely mitigated through innovations like memory banks~\citep{he2020momentum}, negative-free architectures~\citep{grill2020bootstrap, tian2021understanding}, and feature decorrelation objectives~\citep{zbontar2021barlow, bardes2021vicreg}. 
% 3
However, a more fundamental limitation remains persistent and under-addressed: standard CL representations suffer from a severe lack of interpretability.
% 4
In typical contrastive frameworks, the learned embedding space is highly entangled, where semantic concepts are distributed holistically across all feature dimensions rather than being aligned with specific axes~\citep{wang2024non}.
% 5
Consequently, individual dimensions lack specific semantic meaning (polysemy), rendering state-of-the-art models as opaque ``black boxes"~\citep{chen2026matryoshka, lin2026controllable, lai2024faithful, hu2024editable, hu2025semi}. 
% 6
This lack of interpretability poses significant risks for safety-critical fields~\citep{thirunavukarasu2023large, li2023large,hu2024dissecting,zhou2025flattery,yang2026visual,yang2025fraud,yao2025understanding,li2025towards,zhang2025modalities}, where verifying the decision-making process of a model is as crucial as its predictive accuracy.

% ====== 第三段 ======
% 1
To specifically address this problem, \citet{wang2024non} recently proposed Non-Negative Contrastive Learning (NCL). 
% 2
By enforcing non-negativity, NCL establishes a theoretical equivalence to Non-negative Matrix Factorization (NMF)~\citep{lee1999learning}, successfully aligning feature dimensions with semantic clusters. 
% 3
Nevertheless, while NCL significantly improves interpretability, we identify a critical \textbf{Optimization Conflict} inherent in its deterministic nature.
% 4
As illustrated in Figure~\ref{fig: the compositionality gap} and Figure~\ref{fig: flow chart}, real-world data is always compositional, distinct objects often share common high-frequency features, for example, a ``blue sky" background. 
% 5
Consider a negative pair consisting of a bird and a plane against the same sky: the contrastive loss attempts to repel these images, forcing the model to suppress the shared ``sky" dimension. 
% 6
Conversely, a positive pair, such as two views of the bird, requires aligning this same dimension.
% 7
This optimization conflict induces severe gradient oscillation on shared dimensions, effectively obstructing the emergence of the clean, disentangled representations that NCL inherently seeks to achieve, we will elaborate on this in section \ref{sec:method}.

% ====== 第四段 ======
% 1
To tackle this challenge, we argue that the model requires a mechanism to dynamically suppress these ambiguous common features during contrastive alignment.
% 2
By probabilistically filtering out such task-irrelevant information, the optimization process can focus exclusively on discriminative semantic components, thereby resolving the gradient conflict.
% 3
Driven by this insight, we propose Bayesian gated Non-Negative Contrastive Learning (\textbf{BayesNCL}).
% 4
Instead of deterministic activation, we introduce a Bayesian Gating Head that learns a Bernoulli distribution for each feature dimension, allowing the model to adaptively ``switch off" conflict-inducing dimensions.
% 5 
Empirically, this modification yields a substantial leap in feature interpretability: BayesNCL achieves a remarkable 142.1\% relative improvement in semantic consistency compared to the SOTA baseline in ImageNet-100.
% 6
We summarize our contributions as follows:
\begin{itemize}
    \item We identify and formalize the optimization conflict in Non-Negative Contrastive Learning caused by common features, revealing that deterministic similarity measures are insufficient for perfect interpretability.
    \item We propose BayesNCL, a Bayesian Gating mechanism that allows the model to dynamically filter out task-irrelevant common features while encouraging a principled sparse prior.
    \item We empirically show that BayesNCL significantly outperforms SOTA baselines in interpretability metrics (with a 142.1\% gain in semantic consistency) without sacrificing representation power, offering a better alternative for reliable and interpretable self-supervised learning.
\end{itemize}

\section{Related Work}
\label{sec:related work}

\begin{figure*}[ht]
    \centering
    \includegraphics[width=0.8\linewidth]{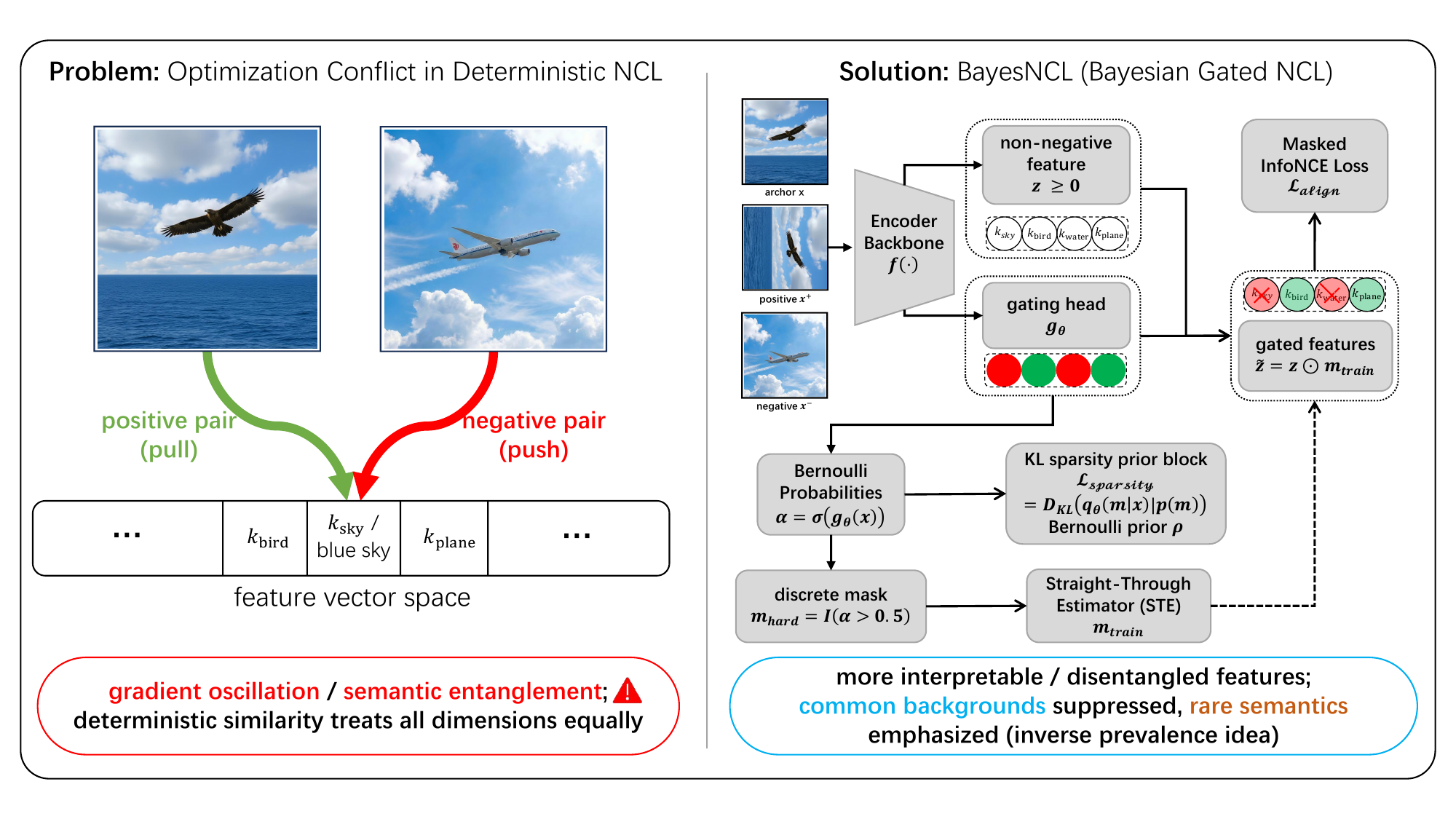} 
    \caption{(Left) Deterministic similarity measure: When using deterministic similarity measures, common background features (e.g., ``blue sky") shared between semantically distinct classes (e.g., ``bird" and ``plane") trigger opposing gradient objectives: alignment for positive pairs and repulsion for negative pairs. This conflict causes gradient oscillation and semantic entanglement. (Right) The BayesNCL Solution: To resolve this, we introduce a probabilistic gating mechanism. The gating head generates the Bernoulli parameter $\alpha$, which is thresholded to generate the discrete mask $m_{hard}$. We employ the Straight-Through Estimator to apply $m_{hard}$ during the forward pass ($z_{gated} = z \odot m_{hard}$) while allowing gradient backpropagation. The training objective combines a masked InfoNCE loss $L_{align}$ with a KL-divergence term $L_{sparsity}$, constraining the gate distribution against a sparse prior $\rho$ to suppress uninformative common features.}
    \label{fig: flow chart}
\end{figure*}

\paragraph{Contrastive learning.} 
% 1
Standard contrastive frameworks~\citep{chen2020simple, he2020momentum} excel at instance discrimination but implicitly assume images represent monolithic concepts~\citep{wu2018unsupervised, saunshi2019theoretical}.
% 2
This assumption fails in real-world scenes composed of multiple semantic factors leading to the \textit{compositionality gap} where semantically similar backgrounds are treated as negatives.
% 3
To mitigate this, early approaches adopted \textit{sample-level} strategies. 
% 4
For instance, Debiased Contrastive Learning (DCL)~\citep{chuang2020debiased} and Hard Negative Contrastive Learning (HCL)~\citep{robinson2020contrastive} attempt to correct the training distribution by re-weighting false negatives or utilizing prototypes~\citep{li2020prototypical}.
% 5
However, since these methods operate on entangled representations where semantic meanings are distributed across all dimensions, they must re-weight the entire image.
% % 6
% They lack the granularity to isolate specific conflicting attributes, for example, suppressing the background while keeping the foreground, often leading to sub-optimal semantic alignment.
% 7
To achieve granular control, recent work has shifted towards \textit{feature-level disentanglement}. 
% 8
Most notably, Non-Negative Contrastive Learning (NCL)~\citep{wang2024non} enforces non-negativity to induce sparse representations where specific dimensions correspond to distinct semantic clusters.
% 9
However, this explicit disentanglement hides a fundamental \textit{optimization conflict}: when positive and negative pairs share a common feature, such as ``blue sky'', since NCL treats all activated dimensions with equal importance, the deterministic objective attempts to simultaneously align and repel the exact same feature dimension, causing gradient oscillation.
% 10
This limitation motivates our proposed \textbf{BayesNCL}.
% 11
By introducing a probabilistic gating mechanism with a Bayesian sparsity prior, BayesNCL explicitly models feature prevalence, adaptively filtering out uninformative commonalities during contrast to ensure semantic consistency.
% 12
For more related works, please refer to Appendix \ref{app:related works}

\section{Preliminaries}
\label{sec:preliminaries}

% \begin{figure*}[ht]
%     \centering
%     \includegraphics[width=0.8\linewidth]{figures/consistency_entropy_analysis_only_NCL_BayesNCL.pdf} 
%     \caption{Comaprisons between NCL baseline and our BayesNCL: (a) Consistency: measures the purity of latent dimensions by calculating the average proportion of the dominant class among activated samples. (b) $H_{sum}$: reflects the uncertainty of the total activation energy distributed across different classes. (c) $H_{mean}$: evaluates the entropy of class-wise average activations to assess feature selectivity independent of class imbalance. (d) $H_{freq}$: quantifies how uniformly a feature is activated across classes based on binary occurrence rather than magnitude. Formal mathematical definitions for all metrics are provided in Appendix \ref{app:metrics}.}
%     \label{fig: interp main results}
% \end{figure*}

\paragraph{Contrastive Learning.} 
% 1
We consider the standard representation learning setting where an encoder $f: \mathbb{R}^d \to \mathbb{R}^K$ extracts low-dimensional features $z = f(x) \in \mathbb{R}^K$ from inputs $x \in \mathbb{R}^d$.
% 2
Data is generated via a three-step sampling process: first, a natural sample $\bar{x}$ is drawn from the data distribution $P(\bar{x})$; second, positive sample pair $(x, x^+)$ are sampled from the augmentation distribution $\mathcal{A}(\cdot|\bar{x})$; third, negative samples $x^-$ are drawn independently from the marginal distribution $P(x) = \mathbb{E}_{\bar{x}}[\mathcal{A}(x|\bar{x})]$. 
% 3
With $s(z, z') = z^\top z' / \tau$ denoting the temperature-scaled similarity between two representation vectors, the canonical InfoNCE loss~\citep{oord2018representation} is formulated as:
% \begin{equation}
%     \mathcal{L}_{\text{NCE}}(f) = - \mathbb{E}_{x,x^+,\{x^-_i\}_{i=1}^M} \left[ \log \frac{\exp(s(z, z^+))}{\exp(s(z, z^+)) + \sum_{j=1}^{M} \exp(s(z, z^-_j))} \right],
% \end{equation}
% where $\{x^-_j\}_{j=1}^M$ denotes a set of $M$ negative samples. 
\begin{equation}
    \mathcal{L}_{\text{NCE}} = \mathbb{E}_{\mathcal{D}} \left[ - s(z, z^+) + \log \left( e^{s(z, z^+)} + \sum_{j=1}^{M} e^{s(z, z^-_j)} \right) \right],
\end{equation}
where $\mathcal{D}$ denote the data distribution sampling tuples of anchor $x$, positive $x^+$, and negatives $\{x^-_i\}_{i=1}^M$.
% 4
Theoretically, this objective encourages the feature embeddings to be uniformly distributed on the hypersphere while maintaining local alignment for positive pairs~\citep{wang2020understanding}.

\paragraph{Non-negative Contrastive Learning (NCL).} 
% 1
Different from standard CL allows features to lie anywhere in the vector space, NCL~\citep{wang2024non} imposes a non-negative constraint on the encoder output, i.e., $f(x) \in \mathbb{R}^K_{\geq 0}$.
% 2
By theoretically establishes the equivalence between the non-negative contrastive objective and Non-negative Matrix Factorization (NMF), NCL yields sparse and disentangled representations, in which each feature dimension tends to represent a distinct semantic attribute.

\paragraph{Latent Class Formulation.} 
% 1
To analyze the semantic properties of the features, we strictly follow the latent class formulation introduced in NCL.
% 2
Specifically, we assume the data is generated from a set of $m$ discrete latent classes $\mathcal{C} = \{c_1, \dots, c_m\}$ with prior probabilities $P(c)$, and positive samples generated by contrastive learning are drawn independently from the same latent class.
% 3
Ideally, if the feature dimension $k$ equals $m$, NCL proves that the optimal representation vector $\phi(x)$ approximates the posterior probability of latent classes up to a permutation and scaling:
\begin{equation}
    \phi_j(x) \propto P(c_{\pi(j)}|x).
\end{equation}
% 4
This implies a strict correspondence where the $j$-th dimension of the feature vector indicates the membership of sample $x$ to a specific latent class.

\section{Methodology}
\label{sec:method}

\begin{table*}[ht] % 使用 table* 跨两栏
\caption{Comparison of interpretability metrics between our approach and different baselines. $\uparrow$ indicates higher is better, $\downarrow$ indicates lower is better. ``Act." represents the activation ratio of the feature dimension. \textbf{Bold} shows the superior result.}
\label{tab:interp main}
% \vskip 0.15in
\begin{center}
\begin{small}
\setlength{\tabcolsep}{3.0pt} % 稍微增加列间距，跨栏有足够空间
\begin{tabular}{l ccccc ccccc ccccc}
\toprule
\multirow{2}{*}[-0.5ex]{Method} & \multicolumn{5}{c}{CIFAR-10} & \multicolumn{5}{c}{CIFAR-100} & \multicolumn{5}{c}{ImageNet-100} \\

\cmidrule(lr){2-6} \cmidrule(lr){7-11} \cmidrule(lr){12-16}

& Cons. $\uparrow$ & $H_{s}$ $\downarrow$ & $H_{m}$ $\downarrow$ & $H_{f}$ $\downarrow$ & Act. & Cons. $\uparrow$ & $H_{s}$ $\downarrow$ & $H_{m}$ $\downarrow$ & $H_{f}$ $\downarrow$ & Act. & Cons. $\uparrow$ & $H_{s}$ $\downarrow$ & $H_{m}$ $\downarrow$ & $H_{f}$ $\downarrow$ & Act. \\
 
\midrule
CL             & 10.00 & 2.29 & 2.29 & 2.30 & 1.00 & 1.00 & 4.57 & 4.57 & 4.61 & 1.00 & 1.00 & 4.59 & 4.59 & 4.61 & 1.00 \\

NCL            & 53.82 & 1.09 & \textbf{1.09} & 1.38 & 0.95 & 9.91 & 3.29 & 3.30 & 3.77 & 0.90 & 14.93 & 3.28 & 3.55 & 3.26 & 0.48 \\

NCL+TopK       & 51.81 & 1.12 & 1.12 & 1.40 & 0.86 & 12.32 & 3.20 & 3.30 & 3.68 & 0.84 & 14.27 & 3.35 & 3.60 & 3.32 & 0.47 \\

\midrule
BayesNCL (GS)  & 53.68 & 1.12 & 1.16 & 1.28 & 0.63 & 20.75 & 2.81 & 3.33 & 3.20 & 0.84 & 11.66 & 3.36 & 3.40 & 3.70 & 0.13 \\

BayesNCL (STE) & \textbf{56.50} & \textbf{0.99} & 1.12 & \textbf{1.25} & 0.89 & \textbf{22.02} & \textbf{2.70} & \textbf{3.20} & \textbf{3.09} & 0.87 & \textbf{36.14} & \textbf{2.10} & \textbf{2.44} & \textbf{2.37} & 0.50 \\

\bottomrule
\end{tabular}
\end{small}
\end{center}
% \vskip -0.1in
\end{table*}

\begin{table*}[t]
\centering
\caption{Downstream Performance and Feature Efficiency. Comparison of Linear Probing Accuracy and Image Retrieval Precision. For Image Retrieval, we leverage the disentangled nature of our representations by performing feature selection, utilizing only the top $K$ dimensions ($K=64$ for CIFAR-10; $K=128$ for CIFAR-100; $K=256$ for ImageNet-100). Since it is meaningless to select feature dimensions when using standard CL, we do not report the results. For all results, we report the mean of three random seeds.}
\label{tab:acc main}
% \vskip 0.1in
\begin{small}
\setlength{\tabcolsep}{3.0pt} % 极度紧凑的列间距，确保 13 列能塞进页面
\begin{tabular}{l @{\hspace{6pt}} cc cc @{\hspace{10pt}} cc cc @{\hspace{10pt}} cc cc}
\toprule
\multirow{3}{*}{\textbf{Method}} & \multicolumn{4}{c}{CIFAR-10} & \multicolumn{4}{c}{CIFAR-100} & \multicolumn{4}{c}{ImageNet-100} \\
\cmidrule(lr{10pt}){2-5} \cmidrule(lr{10pt}){6-9} \cmidrule(lr){10-13}
& \multicolumn{2}{c}{Linear Probe} & \multicolumn{2}{c}{Retrieval} & \multicolumn{2}{c}{Linear Probe} & \multicolumn{2}{c}{Retrieval} & \multicolumn{2}{c}{Linear Probe} & \multicolumn{2}{c}{Retrieval} \\
\cmidrule(lr){2-3} \cmidrule(lr){4-5} \cmidrule(lr){6-7} \cmidrule(lr){8-9} \cmidrule(lr){10-11} \cmidrule(lr){12-13}
& Acc@1 & Acc@5 & P@1 & P@3 & Acc@1 & Acc@5 & P@5 & P@10 & Acc@1 & Acc@5 & P@5 & P@10 \\
\midrule

CL             & 87.88 & 99.62 & - & - & 59.72 & 85.94 & - & - & 68.31 & 90.24 & - & - \\
    
NCL            & 87.80 & 99.61 & 61.67 & 57.85 & 60.67 & 86.44 & 24.68 & 23.03 & 69.63 & 91.23 & 12.64 & 11.04 \\
    
NCL+TopK       & \textbf{88.06} &  \textbf{99.63} & 62.46 & 60.06 &  \textbf{60.78} & 86.43 & 26.79 & 25.06 & 70.13 & 91.21 & 12.71 & 11.06 \\

\midrule

% BayesNCL (GS)  & 87.71 & 99.54 & 66.62 & 64.91 & 60.31 & 86.54 & 28.53 & 26.77 & 68.58 & 90.74 & 13.86 & 11.83 \\

BayesNCL & 88.02 & 99.59 & \textbf{63.59} & \textbf{61.23} & 60.69 & \textbf{86.62} & \textbf{28.47} & \textbf{26.61} & \textbf{70.44} & \textbf{91.71} & \textbf{13.13} & \textbf{11.32} \\

\bottomrule
\end{tabular}
\end{small}
\end{table*}

% 1
In this section, we first formalize the optimization conflict inherent in deterministic similarity measures. 
% 2
We then introduce \textbf{BayesNCL}, a variational framework that resolves this conflict via probabilistic gating. 
% 3
Finally, we provide theoretical analysis of our approach.

\subsection{Problem Formulation: The Optimization Conflict}
\label{sec:problem_formulation}

% 第一段
% 1
Standard Non-Negative Contrastive Learning (NCL) assumes a deterministic mapping $f: \mathcal{X} \to \mathbb{R}^K_{\ge 0}$. 
% 2
While effective for single-object images, this assumption fails in compositional scenes where objects share common background features, such as ``blue sky". 
% 3
We define this phenomenon formally as the \textit{Optimization Conflict}: the model simultaneously attempts to align shared features for positive pairs while repelling them for negative pairs.
% 4
Our empirical analysis of NCL-pretrained representations reveals that feature dimensions with high entropy indeed frequently encode common background elements.
% 5
These non-discriminative features induce spurious similarities between semantically distinct classes, providing evidence for the optimization conflict we aim to address (see Appendix \ref{app:visualize ncl dim} for more details).

% 第二段
% 1
Following the latent class formulation in NCL, a feature dimension $k$ is considered perfectly disentangled if it activates for exactly one class $c_i$. 
% 2
However, in real-world data, background features violate this exclusivity.

\begin{definition}[Conflicting Feature]
\label{def:conflict feature}
A feature dimension $k$ is defined as a \textbf{Conflicting Feature} if it is active with high probability across a subset of semantically disjoint classes $S_k \subset \mathcal{C}$ where $|S_k| \ge 2$. Formally:
\begin{equation}
    \exists c_i, c_j \in \mathcal{C}, i \neq j, \quad \text{s.t.} \quad P(z_k > \epsilon | c_i) \cdot P(z_k > \epsilon | c_j) \approx 1,
\end{equation}
where $\epsilon$ is a small activation threshold.
\end{definition}

% 第三段
% 1
To understand the consequence of conflicting features, we analyze the gradient of the standard InfoNCE loss with respect to a specific feature dimension $k$. 
% 2
The gradient can be decomposed into two opposing forces:
\begin{equation}
    \label{eq:grad_decomp}
    \nabla_{z_k} \mathcal{L} = \underbrace{\frac{\partial \mathcal{L}_{pos}}{\partial s(z, z^+)} \cdot \frac{\partial s}{\partial z_k}}_{\mathcal{F}_{align, k} (< 0)} + \underbrace{\sum_{j=1}^M \frac{\partial \mathcal{L}_{neg}}{\partial s(z, z^-_j)} \cdot \frac{\partial s}{\partial z_k}}_{\mathcal{F}_{rep, k} (> 0)},
\end{equation}
where $s(\cdot, \cdot)$ denotes the similarity metric, such as the dot product commonly used in practice. 
% 3
$\mathcal{F}_{align, k}$ drives the feature to increase for positive pairs, while $\mathcal{F}_{rep, k}$ drives it to decrease for negative pairs.
% 4
Such \textit{conflicting features} (e.g., ``blue sky'') suffer from an optimization tug-of-war: the positive phase pulls the feature up via $\mathcal{F}_{align}$, whereas the negative phase pushes it down via $\mathcal{F}_{rep}$ due to the high prevalence of backgrounds in negative pairs. 
% 5
This conflict results in gradient oscillation and hinders semantic consistency.

\begin{proposition}[Gradient Instability]
\label{prop:gradient}
For a conflicting feature $k$ defined in Definition \ref{def:conflict feature}, under the standard InfoNCE loss, the expected gradient approaches zero while the variance remains high:
\begin{equation}
    \mathbb{E}_{\mathcal{D}}[\nabla_{z_k} L] \approx 0, \quad \text{but} \quad \text{Var}_{\mathcal{D}}(\nabla_{z_k} L) \gg 0.
\end{equation}
\end{proposition}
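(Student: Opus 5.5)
We will work directly with the dot-product similarity $s(z,z')=z^\top z'/\tau$ and differentiate the InfoNCE loss with respect to the $k$-th coordinate of the anchor $z$. Writing $p^+=e^{s(z,z^+)}/Z$ and $p_j=e^{s(z,z^-_j)}/Z$ with $Z=e^{s(z,z^+)}+\sum_j e^{s(z,z^-_j)}$, a direct computation gives
\begin{equation*}
\nabla_{z_k}\mathcal{L}=\frac{1}{\tau}\Big[-(1-p^+)\,z^+_k+\sum_{j=1}^M p_j\, z^-_{j,k}\Big],
\end{equation*}
so that $\mathcal{F}_{align,k}=-(1-p^+)z^+_k/\tau\le 0$ and $\mathcal{F}_{rep,k}=\sum_j p_j z^-_{j,k}/\tau\ge 0$ by non-negativity of $f$. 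This makes the decomposition in \eqref{eq:grad_decomp} explicit. Since $(1-p^+)=\sum_j p_j$, the gradient is $\frac1\tau\sum_j p_j\,(z^-_{j,k}-z^+_k)$, i.e.\ a softmax-weighted average of the differences between negative and positive activations on dimension $k$.

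For the expectation, we use Definition \ref{def:conflict feature} in its idealized form. For a conflicting feature, $z_k>\epsilon$ holds with probability $\approx 1$ on the classes in $S_k$, and we model the activation magnitude on $S_k$ as approximately class-independent: $z_k\mid c$ has a common distribution for all $c\in S_k$. Under the latent class formulation, $x^+$ is drawn from the same class as $x$, and $x^-_j$ from the marginal. Conditioning on the anchor, if the anchor and the negatives both lie in $S_k$ (the prevalence regime), then $z^+_k$ and $z^-_{j,k}$ are approximately identically distributed. Hence
\begin{equation*}
\mathbb{E}[z^-_{j,k}-z^+_k]\approx 0 .
\end{equation*}
The delicate point is that the weights $p_j$ depend on the full vectors. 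We will therefore argue that, for a dimension that carries no class information, $z_k$ is approximately independent of the remaining coordinates that drive $s$, to leading order in $z_k/\tau$. Its contribution to $p_j$ is a common factor $e^{z_k z^\pm_k/\tau}$ that is nearly the same for positive and negatives. This lets $p_j$ factor out of the expectation, giving $\mathbb{E}[\nabla_{z_k}\mathcal{L}]\approx \frac1\tau\sum_j\mathbb{E}[p_j]\cdot 0\approx 0$. I expect this decoupling step to be the main obstacle. The cleanest route is a first-order expansion of $p_j$ in the $k$-th coordinate, bounding the remainder by $O(\mathrm{Var}(z_k)/\tau)$, or simply stating it as a modeling assumption consistent with the ``$\approx$'' in the statement.

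For the variance, we use the law of total variance and keep the term
\begin{equation*}
\mathbb{E}\big[\mathrm{Var}(\nabla_{z_k}\mathcal{L}\mid \text{weights})\big]=\frac{1}{\tau^2}\,\mathbb{E}\Big[\sum_j p_j^2\Big]\big(\mathrm{Var}(z^-_k)+\mathrm{Var}(z^+_k\mid x)\big)+\text{cross terms}.
\end{equation*}
Because the feature is active with high probability but its magnitude fluctuates across images (as with background extent), $\mathrm{Var}(z_k\mid c)$ is bounded below by a positive constant. Moreover, $\sum_j p_j^2\ge (1-p^+)^2/M$ is bounded away from zero whenever the positive does not dominate, which is the regime during training. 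The two forces $\mathcal{F}_{align,k}$ and $\mathcal{F}_{rep,k}$ are each of order $\bar z_k/\tau$ with $\bar z_k>\epsilon$, and they do not cancel samplewise, only in mean. We conclude that $\mathrm{Var}(\nabla_{z_k}\mathcal{L})\gtrsim c\,\epsilon^2/\tau^2\gg0$ relative to the vanishing mean. This yields the stated instability: the signal-to-noise ratio $|\mathbb{E}\nabla|/\sqrt{\mathrm{Var}\,\nabla}\to 0$, so SGD updates on dimension $k$ oscillate.
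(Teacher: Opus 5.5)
The paper gives no proof of this proposition. It rests only on the tug-of-war reading of \eqref{eq:grad_decomp} and the empirical AF--GV correlations in Table~\ref{tab:gradient_oscillation}. Your identity $\nabla_{z_k}\mathcal{L}=\frac1\tau\sum_j p_j(z^-_{j,k}-z^+_k)$ is correct and makes that decomposition explicit. Conditioning on the anchor is also the right move: with the weights held fixed, the \emph{unconditional} mean vanishes for every coordinate, because $x^+$ and $x^-_j$ both have marginal $P(x)$. So only the conditional statement says anything specific about conflicting features. The quantitative steps, however, have gaps.

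\textbf{Variance.} Your own identity shows the gradient depends only on the differences $z^-_{j,k}-z^+_k$. If $z_k$ is constant on $S_k$, which Definition~\ref{def:conflict feature} permits, the gradient is identically zero. The activation level $\epsilon$ therefore contributes nothing, and $\mathrm{Var}\gtrsim c\,\epsilon^2/\tau^2$ does not follow. You need a separate hypothesis $\mathrm{Var}(z_k\mid c)\ge\sigma^2>0$. The bound should then come from the positive term, whose coefficient is $(1-p^+)^2$ rather than $\sum_j p_j^2$, giving $\mathrm{Var}\gtrsim(1-p^+)^2\sigma^2/\tau^2$; your route via $\sum_j p_j^2\ge(1-p^+)^2/M$ decays with $M$. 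Similarly, ``the prevalence regime'' must be stated as $P(c\in S_k)\approx1$, which is much stronger than $|S_k|\ge2$. If $z_k\approx0$ off $S_k$, then for anchors in $S_k$ the fixed-weight mean is $-(1-p^+)(1-P(S_k))\,\mu/\tau$ with $\mu=\mathbb{E}[z_k\mid S_k]$. That is a first-order push upward, not zero.

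\textbf{Mean.} The decoupling step is not a removable technicality. The $k$-th coordinate tilts the softmax toward negatives with large $z^-_{j,k}$, and it lowers $1-p^+$ when $z^+_k$ is large; both effects have the same sign. Take exactly your model: $w_0=z^+_k$ and $w_j=z^-_{j,k}$ are i.i.d.\ and independent of the other coordinates. Those coordinates enter through factors $B_i$, so the weights over the positive and negatives are $q_i(t)\propto B_i e^{t w_i/\tau}$. The identity $\frac{d}{dt}\sum_i q_i(t)w_i=\frac1\tau\mathrm{Var}_{q(t)}(w)$ then gives exactly
\begin{equation*}
\mathbb{E}\big[\nabla_{z_k}\mathcal{L}\mid z_k,B\big]=\frac{1}{\tau^2}\int_0^{z_k}\mathbb{E}\big[\mathrm{Var}_{q(t)}(w)\mid B\big]\,dt>0,
\end{equation*}
a strictly repulsive drift of order $z_k\sigma^2/\tau^2$. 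This drift is proportional to the same $\sigma^2$ you need bounded below for the variance half. So a remainder of size $O(\mathrm{Var}(z_k)/\tau)$ is not negligible under your own hypotheses, and ``mean $\approx0$'' and ``variance $\gg0$'' cannot both be asserted in absolute terms.

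What your argument can establish is the relative statement $|\mathbb{E}\,\nabla_{z_k}\mathcal{L}| / \sqrt{\mathrm{Var}\,\nabla_{z_k}\mathcal{L}}=O(\bar z_k\sigma/\tau)$. The drift is second order while the noise is first order, so the signal-to-noise ratio is small exactly when $\bar z_k\sigma\ll\tau$. State and prove that version, listing these as explicit assumptions beyond Definition~\ref{def:conflict feature}: $P(S_k)\approx1$, class-independent magnitude on $S_k$, $\sigma^2>0$, and $\bar z_k\sigma\ll\tau$. Note also that the sign of the drift means plain InfoNCE does slowly suppress such a feature. The instability is therefore a vanishing signal-to-noise ratio, not an exactly zero mean.
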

\begin{remark}
Proposition \ref{prop:gradient} indicates that deterministic similarity measures suffers from stochastic gradient oscillation on common dimensions. This prevents the optimizer from settling into a sparse solution, leading to entangled representations where background noise persists.
\end{remark}

% 1
We summarize the corresponding feature-wise gradient evidence in Table~\ref{tab:gradient_oscillation}, and provide the detailed empirical analysis in Section~\ref{sec:experiment}.

\begin{table}[htbp]
  \centering
  \caption{\textbf{Gradient dynamics during pre-training on CIFAR-100.} We report Spearman correlations among Activation Frequency (AF), Gradient Variance (GV), and Semantic Consistency (SC) for each feature dimension.}
  \label{tab:gradient_oscillation}
  \resizebox{\columnwidth}{!}{%
  \begin{tabular}{c|ccc|ccc}
    \toprule
    \multirow{2}{*}{\textbf{Epoch}} & \multicolumn{3}{c|}{\textbf{NCL}} & \multicolumn{3}{c}{\textbf{BayesNCL (Ours)}} \\
    \cmidrule(lr){2-4} \cmidrule(lr){5-7}
    & \textbf{AF-GV} & \textbf{AF-SC} & \textbf{GV-SC} & \textbf{AF-GV} & \textbf{AF-SC} & \textbf{GV-SC} \\
    \midrule
    10  & 0.748 & $-$0.880 & $-$0.646 & 0.337  & $-$0.954 & $-$0.268 \\
    20  & 0.753 & $-$0.861 & $-$0.675 & 0.285  & $-$0.919 & $-$0.189 \\
    30  & 0.787 & $-$0.906 & $-$0.688 & 0.178  & $-$0.893 & $-$0.070 \\
    40  & 0.728 & $-$0.869 & $-$0.627 & $-$0.066 & $-$0.887 & 0.177 \\
    50  & 0.816 & $-$0.906 & $-$0.706 & 0.326  & $-$0.866 & $-$0.189 \\
    60  & 0.705 & $-$0.912 & $-$0.596 & 0.469  & $-$0.836 & $-$0.339 \\
    70  & 0.694 & $-$0.881 & $-$0.607 & 0.519  & $-$0.858 & $-$0.398 \\
    80  & 0.703 & $-$0.882 & $-$0.605 & 0.644  & $-$0.869 & $-$0.468 \\
    90  & 0.635 & $-$0.880 & $-$0.522 & $-$0.008 & $-$0.789 & $-$0.001 \\
    100 & 0.735 & $-$0.910 & $-$0.643 & 0.563  & $-$0.863 & $-$0.399 \\
    \bottomrule
  \end{tabular}%
  }
\end{table}

\textbf{Motivation: Inverse-Prevalence Weighted Similarity.}
% 1
To resolve this conflict, we revisit the definition of similarity from a probabilistic perspective and argue that the standard dot product is suboptimal for compositional data. 
% 2
Building on the latent class formulation, we derive the joint likelihood of a sample pair $(x, x')$ (see derivation in Appendix \ref{app: inverse-prevalence weighted similarity}) as $\log p(x, x') \propto \sum_{k} \frac{p(c_k|x)p(c_k|x')}{P(c_k)}$ . 
% 3
This motivates our proposed similarity metric:
\begin{definition}[Inverse-Prevalence Weighted Similarity]
\label{def:inverse-prevalence weighted similarity}
Let $\pi_k = P(c_k)$ denote the global prevalence of the $k$-th semantic factor. The \textbf{Inverse-Prevalence Weighted (IPW) Similarity} between two representations $z, z'$ is defined as:
\begin{equation}
    s_{IPW}(z, z') = \sum_{k=1}^K \frac{1}{\pi_k} z_k z'_k = z^\top \mathbf{W} z',
\end{equation}
where $\mathbf{W} = \text{diag}(1/\pi_1, \dots, 1/\pi_K)$.
\end{definition}
% 4
Definition \ref{def:inverse-prevalence weighted similarity} provides the theoretical justification for our method. 
% 5
It suggests that rare, discriminative features (low $\pi_k$) should dominate the alignment score, while high-frequency background features (high $\pi_k$) should be down-weighted, which is consistent with the viewpoint of information theory~\citep{shannon1948mathematical, cover1999elements}.

% 1
However, explicitly computing the scalar weight $1/\pi_k$ is intractable as semantic prevalence is unknown during training. 
% 2
BayesNCL achieves a tractable approximation to this theoretical optimum.
% 3
By introducing a learnable gating mechanism, we dynamically suppress high-frequency features, effectively implementing Probability-Weighted Similarity via probabilistic filtering.

\subsection{The BayesNCL Framework}
\label{sec:bayesncl framework}

% 1
To resolve the gradient instability, we propose \textbf{BayesNCL (Bayesian Gated Non-Negative Contrastive Learning)}. 
% 2
We transition from a deterministic view to a probabilistic one, where feature activation is modulated by a latent gating variable.

\textbf{Generative View.} Let $z = f(x) \in \mathbb{R}^K_{\ge 0}$ be the raw non-negative features. We introduce a binary latent mask $m \in \{0, 1\}^K$, which determines the subset of semantically relevant features. The joint distribution of a positive pair $(x, x^+)$ is defined by marginalizing over $m$:
\begin{equation}
    p(x, x^+) \propto \int p(z, z^+ | m) p(m) dm,
\end{equation}
where $p(m)$ is a sparsity-inducing prior, and $p(z, z^+ | m)$ measures alignment solely on the gated subspace $\tilde{z} = z \odot m$.

\textbf{Variational Objective.} Since the marginalization is intractable, we employ amortized variational inference~\citep{kingma2013auto, rezende2014stochastic}. We introduce a \textit{Gating Head} parameterized by $\theta$ to approximate the posterior $q_\theta(m|x)$. Maximizing the Evidence Lower Bound (ELBO) yields our training objective (derivation in Appendix \ref{appendix:elbo_derivation}):
\begin{equation}
    \label{eq:elbo}
    \mathcal{L}_{\text{total}} = \underbrace{\mathbb{E}_{m \sim q_\theta}[\mathcal{L}_{\text{InfoNCE}}(z \odot m)]}_{\mathcal{L}_{\text{align}}} + \lambda \cdot \underbrace{D_{\text{KL}}(q_\theta(m|x) \| p(m))}_{\mathcal{L}_{\text{sparsity}}}.
\end{equation}

\textbf{Implementation via Straight-Through Estimator.}
% 1
The Gating Head predicts the Bernoulli parameters $\alpha = \sigma(g_\theta(x)) \in (0, 1)^K$, where $\sigma(\cdot)$ is sigmoid activation function. 
% 2
To enable backpropagation through the discrete sampling of $m$, we utilize the Straight-Through Estimator (STE)~\citep{bengio2013estimating}. 
% 3
During forward pass, We compute a discrete mask to ensure strict feature selection: $m_{\text{hard}} = \mathbb{I}(\alpha > 0.5).$ 
% 4
During backward pass, gradients flow through the continuous probabilities: $m_{\text{train}} = \text{sg}[m_{\text{hard}} - \alpha] + \alpha$, where $\text{sg}[\cdot]$ denotes the stop-gradient operator. 
% 5
The prior $p(m)$ is set as a factorized Bernoulli distribution $\text{Bern}(\rho)$, where $\rho$ is a hyperparameter controlling the expected sparsity.

\subsection{Theoretical Analysis}
\label{sec:theory}

In this section, we provide theoretical guarantees for BayesNCL from four complementary perspectives: local semantic filtering, global error reduction, information-theoretic optimality and generalization guarantees.

\textbf{Local Mechanism: Sparsity as Semantic Filtering.}
The core innovation of BayesNCL is the $\mathcal{L}_{\text{sparsity}}$ term. We first prove that this term acts as an ``Inverse Prevalence Filter", automatically suppressing features that appear too frequently, such as conflicting background features.

\begin{theorem}[Sparsity as Semantic Filtering]
\label{thm:filtering}
Let $\pi_k$ denote the global prevalence of feature $k$ (i.e., the prior probability of it being active across the dataset). Minimizing the BayesNCL objective (Eq. \ref{eq:elbo}) forces the optimal gating probability $\alpha_k^*$ to zero as $\pi_k \to 1$. Specifically, feature $k$ is gated out ($m_k=0$) if:
\begin{equation}
    \underbrace{\gamma \cdot \pi_k(1-\pi_k)}_{\text{Alignment Gain}} < \underbrace{\lambda \cdot \log(1/\rho)}_{\text{Sparsity Cost}},
\end{equation}
where $\gamma$ is a scaling factor related to the contrastive temperature.
\end{theorem}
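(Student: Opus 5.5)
The argument reduces the objective in Eq.~\ref{eq:elbo} to a per-dimension problem in $\alpha_k$ and compares the marginal alignment gain with the marginal KL cost. Because the prior $p(m)=\prod_k \mathrm{Bern}(\rho)$ and the variational posterior $q_\theta(m|x)=\prod_k \mathrm{Bern}(\alpha_k)$ both factorize, the sparsity term splits into
\begin{equation*}
D_{\mathrm{KL}}(q_\theta\|p)=\sum_k \Big[\alpha_k\log\tfrac{\alpha_k}{\rho}+(1-\alpha_k)\log\tfrac{1-\alpha_k}{1-\rho}\Big].
\end{equation*}
I would freeze all other gates and study $\alpha_k\mapsto \mathcal{L}_{\text{total}}$. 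The cost of keeping feature $k$ on is the KL at $\alpha_k=1$ minus the KL at $\alpha_k=0$, which equals $\log(1/\rho)-\log(1/(1-\rho))$. In the sparse regime $\rho\ll 1$ this is dominated by $\lambda\log(1/\rho)$. I would state this approximation explicitly; alternatively, one can bound the derivative of the KL over the STE-relevant region.

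The alignment side uses the gradient decomposition of Eq.~\eqref{eq:grad_decomp}, applied to the gated similarity $s(z\odot m,z^+\odot m)$. Turning gate $k$ on adds $z_kz_k^+/\tau$ to the positive logit and $z_kz^-_{j,k}/\tau$ to each negative logit. I would linearize the InfoNCE term, i.e.\ take a first-order expansion of the log-sum-exp around the softmax weights. To first order, the expected reduction in $\mathcal{L}_{\text{align}}$ from activating $k$ is
\begin{equation*}
\tfrac{1}{\tau}\Big(\mathbb{E}[z_kz_k^+]-\textstyle\sum_j w_j\,\mathbb{E}[z_kz^-_{j,k}]\Big),
\end{equation*}
where the $w_j$ are softmax weights. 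I would model feature $k$ as a Bernoulli-activated latent factor with prevalence $\pi_k$. Then the positive term behaves like $\pi_k$, since positives share the latent class, and the negative term like $\pi_k^2$, since negatives are independent. The net gain is therefore $\gamma\,\pi_k(1-\pi_k)$, with $\gamma$ absorbing $1/\tau$, the activation magnitude, and the total softmax mass on negatives. This step also reuses the intuition of Proposition~\ref{prop:gradient}: as $\pi_k\to1$, $\mathcal{F}_{align,k}$ and $\mathcal{F}_{rep,k}$ cancel.

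Comparing the two sides gives the result. If $\gamma\pi_k(1-\pi_k)<\lambda\log(1/\rho)$, the objective at $\alpha_k=0$ is lower than at $\alpha_k=1$. Convexity of the KL in $\alpha_k$, together with the linearity of the expected alignment term in $\alpha_k$ (an expectation over $m_k\sim\mathrm{Bern}(\alpha_k)$), then pushes the minimizer toward the low end. The minimizer is $\alpha_k^*=\rho\exp(\cdot)/(\ldots)$, which lies below $0.5$, so $m_{\text{hard}}=0$. Since the left-hand side tends to $0$ as $\pi_k\to1$ while the right-hand side is a fixed positive constant, $\alpha_k^*$ decreases to the prior level in the limit. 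It goes to zero only when $\rho\to0$ as well, and I would phrase the limit claim to make that caveat explicit.

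The main obstacle is the alignment-gain step. It requires justifying the first-order expansion of InfoNCE and the independence/Bernoulli modelling that yields exactly $\pi_k(1-\pi_k)$. I would first handle this by treating it as a mean-field approximation with constant softmax weights and $\gamma$ defined to collect all residual constants, and state those assumptions before the proof.
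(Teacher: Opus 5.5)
Your argument follows the same route as the paper's proof. You isolate dimension $k$, take the alignment gain to be $\gamma\,\pi_k(1-\pi_k)$, take the sparsity cost to be $\lambda\log(1/\rho)$, compare the two, and let $\pi_k\to 1$. The paper gets the gain from a ``discrimination probability'' $P(k\in x,\,k\in x^+)\,P(k\notin x^-)\approx\pi_k(1-\pi_k)$ and simply asserts that the loss reduction is proportional to it. Your first-order expansion of InfoNCE derives the same $\pi_k-\pi_k^2$ from the loss itself, which gives $\gamma$ an explicit meaning.

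One small fix is needed there. In that expansion the coefficient of $\Delta s^+$ is $1-p^+$, which equals the total softmax mass on the negatives. You should therefore normalize the negative weights to sum to one and pull $(1-p^+)\mu^2/\tau$ out as $\gamma$, where $\mu$ is the activation magnitude. With raw weights and coefficient $1$ on the positive term, you get $\pi_k(1-W\pi_k)$ with $W=\sum_j w_j$, not $\pi_k(1-\pi_k)$.

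The real difference is how the cost is counted, and it matters. The paper charges $D_{\mathrm{KL}}(\mathrm{Bern}(1)\|\mathrm{Bern}(\rho))=\log(1/\rho)$ for keeping $k$ and implicitly charges nothing for dropping it. It then asserts that $q_\theta(m_k|x)$ collapses to $0$. That is exactly how it reaches the stated threshold and the limit $\alpha_k^*\to 0$.

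Your differential cost $\log\frac{1-\rho}{\rho}$ is the correct accounting. So is the exact one-dimensional minimizer $\alpha_k^*=\sigma\bigl(\log\frac{\rho}{1-\rho}+G/\lambda\bigr)$, with $G=\gamma\pi_k(1-\pi_k)$. Together they show that the true gating-out condition is $G<\lambda\log\frac{1-\rho}{\rho}$, and that the limit as $\pi_k\to1$ is $\rho$, not $0$. Your route therefore recovers the theorem only for $\rho\ll 1$, with ``zero'' replaced by ``the prior level''.

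You partly acknowledge this, but your text is not consistent about it. Your sentence saying that $\gamma\pi_k(1-\pi_k)<\lambda\log(1/\rho)$ makes the objective at $\alpha_k=0$ lower than at $\alpha_k=1$ contradicts your own cost computation. For $\lambda\log\frac{1-\rho}{\rho}<G<\lambda\log\frac{1}{\rho}$ you get $\alpha_k^*>1/2$, so $m_{\text{hard}}=1$. Moreover, at the paper's own settings $\rho\in\{0.6,0.8\}$, $\log\frac{1-\rho}{\rho}$ is negative. There your exact analysis gives $\alpha_k^*\ge\rho>1/2$ for every nonnegative gain.

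In short, the paper's cruder accounting buys the clean stated form. Yours buys a version whose assumptions ($\rho<1/2$, really $\rho\ll1$, plus the mean-field linearization) are explicit and whose conclusion survives the exact minimization.
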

\begin{proof}
See Appendix \ref{appendix:proof of sparsity as semantic filtering}.
\end{proof}
\begin{remark}
Theorem \ref{thm:filtering} provides the local guarantee for solving the optimization conflict. Conflicting features (e.g., backgrounds) have high prevalence ($\pi_k \approx 1$), rendering their alignment gain negligible compared to the fixed sparsity cost. Consequently, BayesNCL automatically ``switches off" these dimensions, halting the gradient oscillation described in Proposition \ref{prop:gradient}.
\end{remark}

\textbf{Global Guarantee: False Positive Error Reduction.}
% 1
We now extend this local analysis to the entire feature space. 
% 2
Let the feature dimensions be partitioned into a Semantic Signal set $\mathcal{S}_{\text{signal}}$ (discriminative) and a Background Noise set $\mathcal{S}_{\text{noise}}$ (common, high prevalence). 
% 3
In standard contrastive learning, a major source of optimization conflict arises from \textit{spurious alignment}: negative pairs that are semantically distinct but share common background features (e.g., a bird and a plane both in blue sky). 
% 4
We define the similarity score on these pairs as \textit{Background-Induced False Positive Error}.

\begin{theorem}[Reduction of Background-Induced Error]
\label{thm:error_bound}
Consider a negative pair $(x, x^-)$ that is semantically disjoint but shares common background features in $\mathcal{S}_{\text{noise}}$. Let $E_{\text{NCL}}$ and $E_{\text{Bayes}}$ denote their expected similarity scores (error). Under the condition of Theorem \ref{thm:filtering}, BayesNCL strictly reduces this error bound compared to deterministic NCL:
\begin{equation}
    \mathbb{E}[E_{\text{Bayes}}] \approx \mathbb{E}[E_{\text{NCL}}] - \underbrace{\sum_{k \in \mathcal{S}_{\text{noise}}} \mathbb{E}[z_k z_k^-]}_{\text{Spurious Similarity}}.
\end{equation}
\end{theorem}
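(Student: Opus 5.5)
The argument decomposes the similarity of the negative pair coordinate-wise and then uses Theorem~\ref{thm:filtering} to remove the noise coordinates. Because $z, z^- \in \mathbb{R}^K_{\ge 0}$ and $s(z,z^-) = z^\top z^-/\tau$ is a sum over dimensions, we split the index set into $\mathcal{S}_{\text{signal}}$ and $\mathcal{S}_{\text{noise}}$. Dropping the constant factor $1/\tau$, or absorbing it into the definition of the error, this gives
\begin{equation*}
E_{\text{NCL}} = \sum_{k \in \mathcal{S}_{\text{signal}}} z_k z_k^- + \sum_{k \in \mathcal{S}_{\text{noise}}} z_k z_k^- .
\end{equation*}
Under BayesNCL the similarity is evaluated on the gated features $\tilde z = z \odot m$ and $\tilde z^- = z^- \odot m^-$, so
\begin{equation*}
E_{\text{Bayes}} = \sum_{k} m_k m_k^- \, z_k z_k^- .
\end{equation*}
Here $m \sim q_\theta(\cdot|x)$ and $m^- \sim q_\theta(\cdot|x^-)$ are independent given the inputs. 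Taking expectations under the variational posterior, each term contributes $\alpha_k(x)\alpha_k(x^-) z_k z_k^-$. With the STE forward pass this is deterministic: the factor is $\mathbb{I}(\alpha_k>0.5)\,\mathbb{I}(\alpha_k^->0.5)$.

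Next we apply the condition of Theorem~\ref{thm:filtering}. For $k \in \mathcal{S}_{\text{noise}}$ we have prevalence $\pi_k \approx 1$, so the alignment gain $\gamma \pi_k(1-\pi_k)$ falls below $\lambda\log(1/\rho)$. Hence $\alpha_k^* \to 0$, and in particular $m_{k,\text{hard}} = 0$ for both $x$ and $x^-$. The noise block of $E_{\text{Bayes}}$ therefore vanishes. For $k \in \mathcal{S}_{\text{signal}}$ we argue that the gate stays open, $\alpha_k^* \approx 1$. The reason is that for a discriminative feature with moderate $\pi_k$ the alignment gain dominates the sparsity cost, which is the reverse inequality in Theorem~\ref{thm:filtering}. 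Consequently the signal block is preserved: $\mathbb{E}[\sum_{k\in\mathcal{S}_{\text{signal}}} m_k m_k^- z_k z_k^-] \approx \mathbb{E}[\sum_{k\in\mathcal{S}_{\text{signal}}} z_k z_k^-]$.

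Combining the two blocks and taking expectation over $\mathcal{D}$ gives
\begin{equation*}
\mathbb{E}[E_{\text{Bayes}}] \approx \mathbb{E}[E_{\text{NCL}}] - \sum_{k\in\mathcal{S}_{\text{noise}}}\mathbb{E}[z_k z_k^-].
\end{equation*}
The reduction is strict whenever some noise coordinate has $\mathbb{E}[z_k z_k^-]>0$. This is exactly the definition of a shared background: by Definition~\ref{def:conflict feature}, both $z_k$ and $z_k^-$ exceed $\epsilon$ with probability $\approx 1$, so $\mathbb{E}[z_k z_k^-] \gtrsim \epsilon^2 > 0$.

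The main obstacle is the ``$\approx$'' in the signal block. We must justify that the signal gates remain open, and that the raw features $z$ themselves are comparable between the two trained models. Strictly, $z$ under BayesNCL is trained jointly with the gate and may differ from the NCL encoder. I would handle this in the first attempt by comparing both errors at the same encoder $f$, treating the gate as a post-hoc filter. I would then invoke the reverse inequality of Theorem~\ref{thm:filtering} to bound $1-\alpha_k^*$ for signal dimensions by a quantity that vanishes as $\lambda\log(1/\rho)/(\gamma\pi_k(1-\pi_k)) \to 0$. The approximation error is then controlled by $\sum_{k\in\mathcal{S}_{\text{signal}}}(1-\alpha_k\alpha_k^-)\mathbb{E}[z_kz_k^-]$.
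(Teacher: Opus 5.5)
Your skeleton matches the paper's proof. You split $z^\top z^-$ over $\mathcal{S}_{\text{signal}}\cup\mathcal{S}_{\text{noise}}$ and compare both errors at the same raw features $z,z^-$; the paper does this silently, and you at least flag it. You then use Theorem~\ref{thm:filtering} to remove the noise block. The paper bounds $\alpha_k\le\epsilon$ to get an $\epsilon^2$ term, while you use the hard STE mask to get exactly zero.

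The real difference is the signal block. You try to show it is \emph{preserved} by invoking a reverse direction of Theorem~\ref{thm:filtering} that keeps signal gates open. This is the weakest step of your proposal, for three reasons. First, the theorem only gives a sufficient condition for gating out. Second, in its own utility model the gain $\gamma\pi_k(1-\pi_k)$ also vanishes as $\pi_k\to 0$. Third, the paper defines $\mathcal{S}_{\text{signal}}$ as the \emph{low}-prevalence set. So your ``moderate $\pi_k$'' is an extra assumption that can fail for rare signal dimensions.

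The paper instead relies on a hypothesis you never use: the pair is semantically disjoint, $y\neq y^-$. Under disentanglement, the semantic-coincidence term $\sum_{k\in\mathcal{S}_{\text{signal}}} z_k z_k^-$ is already small. Since $0\le\mathbb{E}[\alpha_k\alpha_k^-]\le 1$, the gated signal block lies between $0$ and that term. Hence the discrepancy $\sum_{k\in\mathcal{S}_{\text{signal}}}(1-\mathbb{E}[\alpha_k\alpha_k^-])\,z_kz_k^-$, which is exactly your residual error term, is negligible whatever the signal gates do. Adding this observation to your final bound removes your ``main obstacle'' without any reverse inequality.

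In the other direction, your strictness argument is more careful than the paper's. You use Definition~\ref{def:conflict feature} to get $\mathbb{E}[z_kz_k^-]\gtrsim\epsilon^2$. The paper only appeals to $z\ge 0$, which gives nonnegativity of the reduction, not strict positivity.
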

\begin{proof}
See Appendix \ref{appendix:proof of fp error reduction}. 
\end{proof}
\begin{remark}
Theorem \ref{thm:error_bound} leverages the fact that for semantically disjoint pairs, any similarity in $\mathcal{S}_{\text{noise}}$ constitutes a false positive. BayesNCL drives the gating probability of these high-prevalence features to zero, effectively pruning the spurious similarity term.
\end{remark}

\textbf{Fundamental Principle: Information Constriction.}
Finally, we analyze the information-theoretic implication of the gating mechanism. While standard Variational Information Bottleneck (VIB)~\citep{alemi2016deep} methods minimize the mutual information $I(Z; X)$ by injecting stochastic noise, BayesNCL adopts a \textit{structural} approach. By enforcing a sparse prior on the gating mask, we strictly limit the effective dimensionality of the representation space. This acts as a \textbf{Hard Information Bottleneck}, forcing the encoder to prioritize semantic content over high-frequency background noise.

\begin{theorem}[Information Bound via Sparsity]
\label{thm:ib_bound}
Let $\tilde{Z} = Z \odot M$ be the gated representation, where $M \in \{0, 1\}^K$ is the binary mask vector. Assuming the continuous features $Z$ are bounded within a finite volume, minimizing the sparsity regularizer $\mathcal{L}_{\text{sparsity}}$ imposes an upper bound on the mutual information $I(\tilde{Z}; X)$ via the expected activation rate:
\begin{equation}
    I(\tilde{Z}; X) \le \sum_{k=1}^K \mathbb{E}[m_k] \cdot C_{\text{cont}} + \mathcal{H}_b(\mathbb{E}[m_k]),
\end{equation}
where $\mathbb{E}[m_k]$ is the activation probability of the $k$-th dimension, $C_{\text{cont}}$ is a constant bound on the differential entropy of active features, and $\mathcal{H}_b(\cdot)$ is the binary entropy function.
\end{theorem}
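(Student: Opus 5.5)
The plan is to bound $I(\tilde Z;X)$ by the entropy of $\tilde Z$ and then split that entropy into a discrete part coming from the mask and a continuous part coming from the active coordinates. Since $X$ is continuous, raw differential entropies can be negative, and $I(\tilde Z;X)=h(\tilde Z)-h(\tilde Z\mid X)$ is awkward to handle directly. I would therefore work with a quantized version $\tilde Z^{\Delta}$ at a fixed resolution $\Delta$, or with a mixed discrete/continuous entropy. By the data processing inequality, $I(\tilde Z;X)\le H(\tilde Z)$ in this generalized sense, because the conditional term is nonnegative once quantized. The bounded-volume assumption on $Z$ then keeps every continuous entropy term finite; I will absorb the $\log(1/\Delta)$ resolution term into $C_{\text{cont}}$.

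The first key step is to recognise that $\tilde Z$ determines $M$ almost surely when $Z_k>0$ a.s. on active coordinates. Under the non-negativity constraint this holds up to a null set, or we can simply append $M$ to $\tilde Z$, which only increases the entropy. The chain rule then gives
\begin{equation*}
H(\tilde Z)\le H(M,\tilde Z)=H(M)+H(\tilde Z\mid M).
\end{equation*}

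For the mask term, subadditivity yields $H(M)\le\sum_k H(M_k)=\sum_k\mathcal H_b(\mathbb E[m_k])$, since each $M_k$ is Bernoulli with parameter $\mathbb E[m_k]$ (marginalising $q_\theta(m\mid x)$ over $X$).

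For the conditional term, conditioning on $M=m$ leaves only the coordinates with $m_k=1$ random; the others are deterministically $0$. Subadditivity again gives
\begin{equation*}
H(\tilde Z\mid M=m)\le\sum_k m_k\,C_{\text{cont}},
\end{equation*}
where $C_{\text{cont}}=\log(\text{side length}/\Delta)$ bounds the (quantized) entropy of a single bounded coordinate; for a uniform distribution on the box this is the maximum-entropy value. Taking the expectation over $M$ gives $\sum_k\mathbb E[m_k]\,C_{\text{cont}}$. Combining the two pieces produces the stated bound, with the $\mathcal H_b$ terms summed over $k$, which is how I read the statement's sum.

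The final step ties the bound to $\mathcal L_{\text{sparsity}}$. For a factorized Bernoulli prior with $\rho<1/2$, the KL term is minimised pointwise at $\alpha_k=\rho$. Both $\mathbb E[m_k]\,C_{\text{cont}}$ and $\mathcal H_b(\mathbb E[m_k])$ are increasing on $[0,1/2]$. Hence driving the gate rates toward $\rho$ (or toward zero for features suppressed under Theorem~\ref{thm:filtering}) monotonically shrinks the right-hand side. This justifies the phrase ``imposes an upper bound via the expected activation rate.''

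The main obstacle is making $I(\tilde Z;X)\le H(\tilde Z)$ rigorous for mixed continuous/discrete variables. I would handle it by fixing the quantization $\Delta$ and stating that $C_{\text{cont}}$ depends on it, or alternatively by assuming additive bounded-noise encoders so that $h(\tilde Z\mid X,M)$ is bounded below. I expect to spend most of the care on this step and treat the rest as routine chain-rule bookkeeping.
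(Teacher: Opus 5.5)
Your proposal follows the paper's proof essentially step for step: $I(\tilde Z;X)\le H(\tilde Z)\le H(M)+H(\tilde Z\mid M)$, subadditivity for the mask entropy, a bounded-volume maximum-entropy constant $C_{\text{cont}}$ per active coordinate, and monotonicity of both terms on $[0,1/2]$ to connect the bound to $\mathcal{L}_{\text{sparsity}}$. Your version is more careful than the paper's, which invokes a mean-field assumption where plain subadditivity with the marginal rates $\mathbb{E}[m_k]$ suffices, and which simply declares the entropy of inactive coordinates ``effectively 0''. One caution: data processing gives $I(\tilde Z^{\Delta};X)\le I(\tilde Z;X)$, so your quantized bound does not transfer back to $\tilde Z$ itself (for a deterministic encoder with continuous outputs $I(\tilde Z;X)$ is infinite), and it is your additive-noise assumption that actually delivers the stated inequality.
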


\begin{proof}
See Appendix \ref{appendix:proof of ib}.
\end{proof}

\begin{remark}
\textbf{Resolution of Optimization Conflict.} This theorem provides the theoretical justification for solving the ``Bird/Plane" conflict. By restricting the channel capacity (sparsity), the model faces a resource allocation problem. To minimize the contrastive loss $\mathcal{L}_{\text{InfoNCE}}$, it \textit{must} allocate the limited active dimensions to the most discriminative features (e.g., the bird's beak) while suppressing highly redundant features (e.g., the blue sky) that consume capacity without aiding instance discrimination.
\end{remark}

\section{Experiment}
\label{sec:experiment}

\begin{table*}[ht]
\centering
\caption{Comparison of the effects of different gating mechanisms and gating head complexity. We report Interpretability metrics (Semantic Consistency, $H_f$) and Linear Probe performance (Acc@1, Acc@5). BayesNCL achieves the optimal trade-off. Full metrics are provided in Table \ref{tab:interp soft detach} and Table \ref{tab:interp layer} in Appendix \ref{app: additional experimental results}. For linear probe accuracy, we report the mean of three random seeds.}
\label{tab:ablation_main}
\setlength{\tabcolsep}{2.0pt} % 调整列间距
\begin{tabular}{l | cccc | cccc | cccc}
\toprule
\multirow{3}{*}{\textbf{Method}} & \multicolumn{4}{c|}{\textbf{CIFAR-10}} & \multicolumn{4}{c|}{\textbf{CIFAR-100}} & \multicolumn{4}{c}{\textbf{ImageNet-100}} \\
\cmidrule(lr){2-5} \cmidrule(lr){6-9} \cmidrule(lr){10-13}
 & \multicolumn{2}{c}{\textit{Interpretability}} & \multicolumn{2}{c|}{\textit{Linear Probe}} & \multicolumn{2}{c}{\textit{Interpretability}} & \multicolumn{2}{c|}{\textit{Linear Probe}} & \multicolumn{2}{c}{\textit{Interpretability}} & \multicolumn{2}{c}{\textit{Linear Probe}} \\
 \cmidrule(lr){2-3} \cmidrule(lr){4-5} \cmidrule(lr){6-7} \cmidrule(lr){8-9} \cmidrule(lr){10-11} \cmidrule(lr){12-13}
 & Cons. $\uparrow$ & $H_f$ $\downarrow$ & Acc@1 & Acc@5 & Cons. $\uparrow$ & $H_f$ $\downarrow$ & Acc@1 & Acc@5 & Cons. $\uparrow$ & $H_f$ $\downarrow$ & Acc@1 & Acc@5 \\
\midrule
\multicolumn{13}{l}{\textit{(a) Gating Mechanism}} \\
Soft Gating & 51.84 & 1.40 & 87.72 & \textbf{99.62} & 8.54 & 3.86 & 60.71 & \textbf{86.83} & 33.15 & \textbf{2.26} & 69.87 & 91.47 \\
w/o Grad. Detach & 56.22 & 1.27 & \textbf{88.24} & 99.59 & 20.17 & 3.18 & 60.67 & 86.59 & 22.29 & 2.96 & 67.47 & 89.89 \\
\midrule
\multicolumn{13}{l}{\textit{(b) Gating Head Complexity}} \\
1-Layer MLP & \textbf{56.97} & \textbf{1.22} & 87.87 & 99.57 & \textbf{26.70} & \textbf{2.75} & 60.34 & 86.02 & \textbf{40.85} & 2.38 & 69.55 & 90.63 \\
3-Layer MLP & 55.26 & 1.29 & 87.87 & 99.50 & 17.88 & 3.33 & 60.60 & 86.34 & 26.37 & 2.83 & 68.09 & 90.43 \\
\midrule
\textbf{BayesNCL (Ours)} & 56.50 & 1.25 & 88.02 & 99.59 & 22.02 & 3.09 & \textbf{60.73} & 86.62 & 36.14 & 2.37 & \textbf{70.44} & \textbf{91.71} \\
\bottomrule
\end{tabular}
\end{table*}

% --- 第二组：selected dimensions ---
\begin{figure*}[ht]
    \centering
    \includegraphics[width=0.8\textwidth]{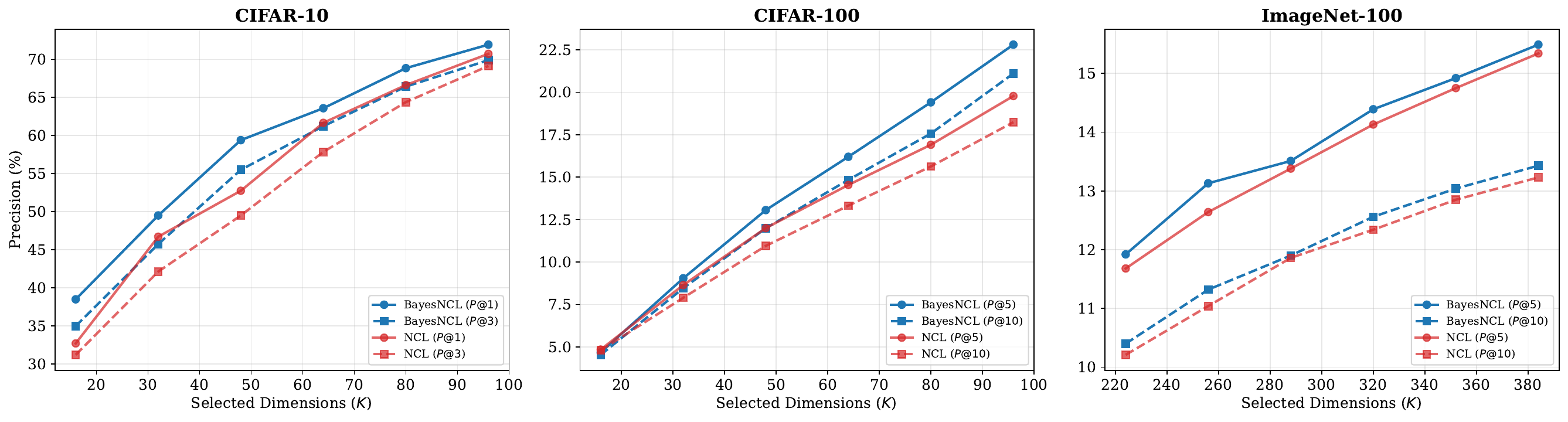}
    \caption{\textbf{Comparison of image retrieval performance on feature selection task.} The results show that compared to the NCL baseline (red), BayesNCL (blue) achieved consistent and significant performance improvement under different sparsity settings. This indicates that the features learned by BayesNCL have a higher semantic information density and can accurately represent image semantics through a very small number of key dimensions, validating its superior feature disentanglement capability. Detailed results are provided in Table \ref{tab:feature_selection} in Appendix \ref{app: additional experimental results}.}
    \label{fig:selected dimensions}
\end{figure*}

% --- 第三组：resnet50 在 imagenet100 上的实验 ---
\begin{table}[ht]
\centering
\caption{\textbf{Scalability Verification on ResNet-50.} We report interpretability metrics to assess method performance on a deeper architecture.}
\label{tab:resnet50}
\begin{tabular}{lccccc} % l=left, c=center
\toprule
Method & Cons. $\uparrow$ & $H_{s}$ $\downarrow$ & $H_{m}$ $\downarrow$ & $H_{f}$ $\downarrow$ \\
\midrule
CL       & 1.00 & 4.59 & 4.59 & 4.61 \\
NCL      & 6.39 & 3.82 & 3.82 & 4.07 \\
BayesNCL & \textbf{13.40} & \textbf{2.74} & \textbf{2.74} & \textbf{3.10} &  \\
\bottomrule
\end{tabular}
\vspace{-12pt}
\end{table}

% --- 第四组：比较不同的 dimention 和 超参数的敏感性 ---
\begin{figure*}[ht]
    \centering
    % Subfigure 1: 
    \begin{subfigure}[b]{0.44\linewidth}
        \centering
        \includegraphics[width=\linewidth]{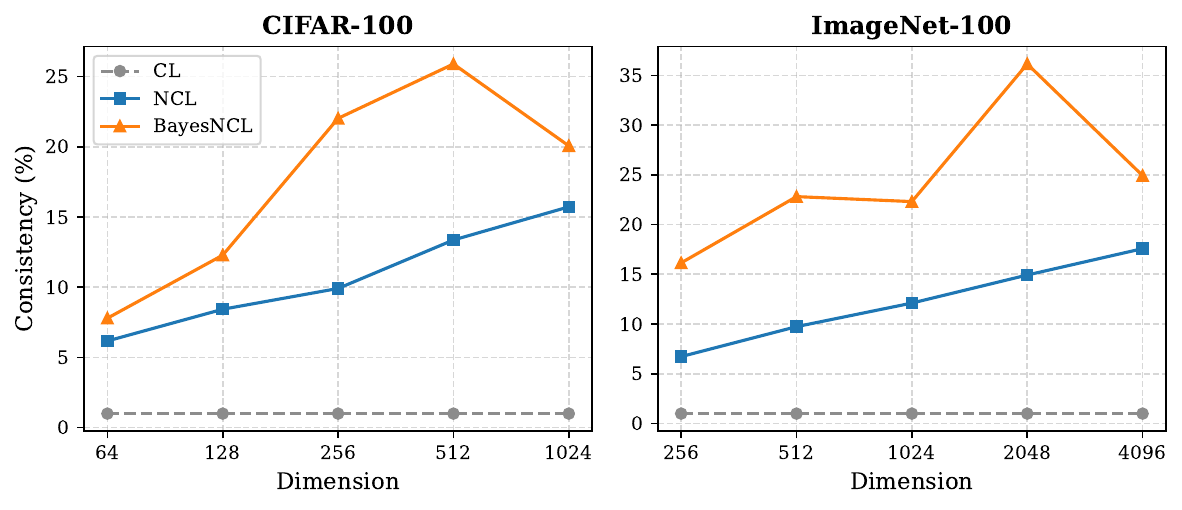}
        \caption{Semantic Consistency with Feature Dimension Variation}
        \label{fig:consistency with feature dimension variation}
    \end{subfigure}
    % 取消 \hfill，改用固定间距 \hspace{...}
    \hspace{2em} 
    % Subfigure 2: 
    \begin{subfigure}[b]{0.48\linewidth}
        \centering
        \includegraphics[width=\linewidth]{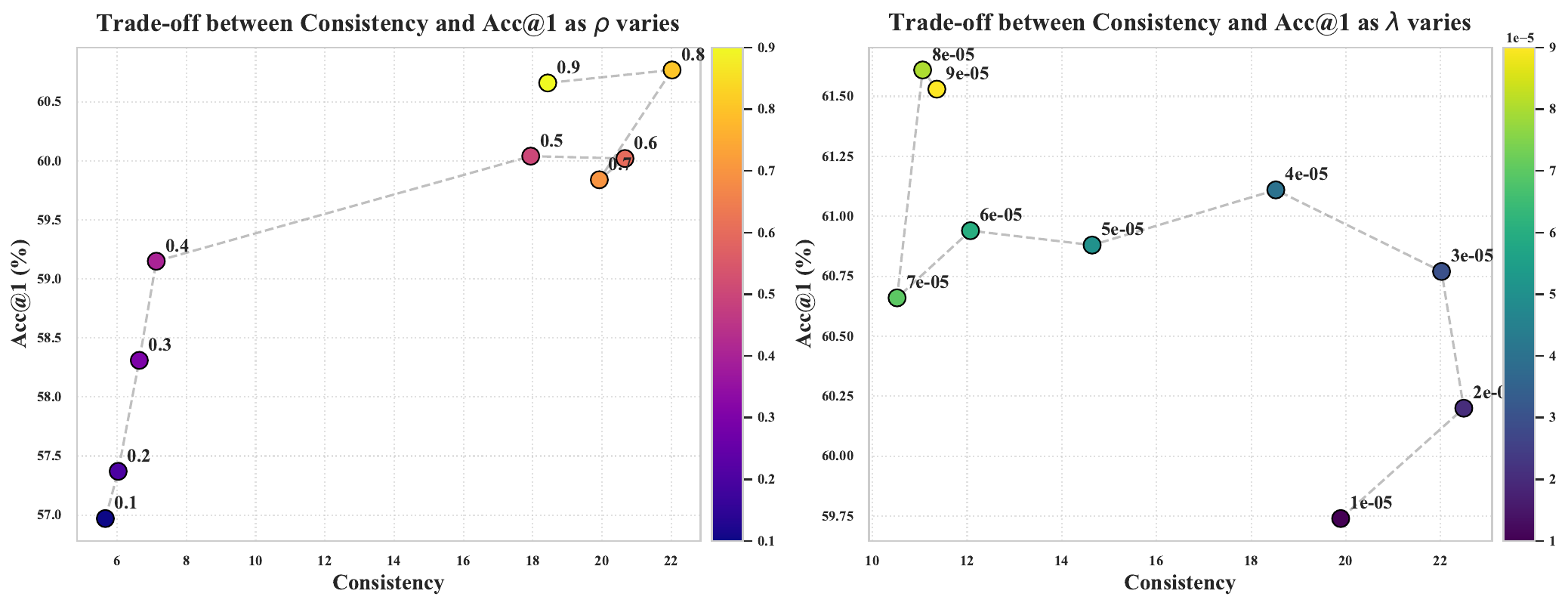}
        \caption{Sensitivity to Sparsity Prior ($\rho$) and KL Weight ($\lambda$)}
        \label{fig:sensitivity to piror and lambda}
    \end{subfigure}
    % big table caption
    \caption{Analysis of Dimensional Scalability and Hyperparameter Trade-offs. \textbf{(a)} Consistency across varying feature dimensions. BayesNCL (orange) demonstrates superior scalability compared to NCL (blue) and CL (gray), effectively mitigating the optimization conflict in high-dimensional spaces on both CIFAR-100 and ImageNet-100. \textbf{(b)} The trade-off between Accuracy and Consistency under different configurations. The left plot varies the sparsity prior $\rho$ (Bernoulli parameter), while the right plot varies the KL regularization weight $\lambda$. The color gradient indicates the magnitude of the hyperparameter, highlighting the optimal operating region for balancing representation sparsity and discriminative power. Detailed results are provided in Table \ref{tab:dimensions}, Table \ref{tab:rho_sensitivity} and Table \ref{tab:lambda_sensitivity} in Appendix \ref{app: additional experimental results}.}
    \label{fig:ablation of feature dim and hyperparameter sensitivity}
\end{figure*}

% =================================
\subsection{Experiment Setup}
% =================================

\textbf{Datasets.} We evaluate our methods on three benchmark datasets: cifar10, cifar100~\citep{krizhevsky2009learning} and Imagenet-100. Regarding ImageNet-100, since no official subset exists, we strictly adopt the partition established by \citet{tian2020contrastive} to ensure fair and consistent comparisons.

\textbf{Baselines.} To evaluate the efficacy of our method, we compare it against standard Contrastive Learning (CL) and its non-negative variant mentioned in Section~\ref{sec:preliminaries}. Furthermore, to isolate the specific advantages of our probabilistic gating mechanism, we implement a Top-$k$ NCL baseline. This variant employs a deterministic hard-thresholding strategy, retaining only the $k$ highest-activating dimensions while masking others. For a fair comparison, the $k$ is chosen to remain consistent with the Bayesian prior $\rho$, i.e., $k = \text{round}(\rho \cdot D)$, where $D$ denotes the total feature dimensionality. In addition, we also consider using Gumbel-Sigmoid~\citep{jang2016categorical, maddison2016concrete, geng2020does} as an alternative to STE in the gating mechanism.
 
\textbf{Evaluation Metrics.} To quantitatively assess the interpretability and disentanglement of the learned features, we employ two primary metrics. \textbf{Semantic Consistency (SC):} Measures whether an activated feature dimension is exclusively associated with a single semantic category. \textbf{Semantic Entropy (SE):} Captures the sparsity of the feature activation distribution across classes. A lower entropy indicates that a feature is specialized to a specific subset of classes. To ensure a comprehensive assessment, we report SE across three distinct modalities: activation energy ($H_{\text{sum}}$), intensity ($H_{\text{mean}}$), and occurrence frequency ($H_{\text{freq}}$). Detailed mathematical definitions are provided in Appendix~\ref{app:metrics}.

% \textbf{Implementation Details.} We adopt ResNet-18~\citep{he2016deep} as the encoder backbone with the gating head implemented as a simple MLP. For the Bayesian gating mechanism, the sparsity prior $\rho$ is set to $0.8$ for CIFAR-10/100 and $0.6$ for ImageNet-100, while the KL-divergence weight $\lambda$ is fixed at $3 \times 10^{-5}$ across all datasets. To ensure stable estimation of the Bernoulli distributions, the learning rate of gating head is set to $0.25\times$ that of the backbone.

\textbf{Implementation Details.} We adopt ResNet-18~\citep{he2016deep} as the encoder backbone for all experiments. All models are optimized using the LARS optimizer~\citep{you2017large}. Due to space limitations, detailed hyperparameter settings are provided in \textbf{Appendix~\ref{app:implementation_details}}.

% =================================
\subsection{Main Results}
\label{subsec:main_results}
% =================================

% 1
We conduct a comprehensive evaluation of BayesNCL on multiple baselines.
% 2
The results demonstrate the competitiveness of our approach and strongly support our hypothesis that probabilistic gating effectively resolves the optimization conflict in compositional data.

\textbf{Interpretability.}
% 1
As reported in Table \ref{tab:interp main}, BayesNCL establishes a new state-of-the-art in feature interpretability across all benchmarks. 
% 2
Notably, on ImageNet-100, our method achieves a Semantic Consistency score of \textbf{36.14}, representing a \textbf{142.1\% relative improvement} over the standard NCL baseline (14.93\%). 
% 3
This substantial gap confirms that the Bayesian gating mechanism successfully filters out polysemantic background features that typically entangle representations.
% 4
To validate the gating discretization strategy, we compared our Straight-Through Estimator (STE) approach against the stochastic Gumbel-Sigmoid relaxation. 
% 5
We observe that Gumbel-Sigmoid underperforms STE, suggesting that deterministic gating is essential for the stability of the contrastive objective; stochastic exploration during the forward pass introduces variance that appears to confuse the model rather than aid disentanglement.
% 6
Furthermore, we analyzed the Feature Activation Ratio to understand the source of these gains. 
% 7
Contrary to the intuition that interpretability stems solely from high sparsity, BayesNCL exhibits a higher activation ratio than NCL on ImageNet-100 while simultaneously achieving superior consistency.
% 8 
This indicates that BayesNCL does not achieve trivial sparsity by merely silencing channels. 
% 9
Instead, it promotes effective sparsity: it dynamically recruits more dimensions to encode distinct semantic concepts while successfully filtering out the entangled noise that plagues deterministic baselines.
% 10
Moreover, referring to Table \ref{tab:resnet50}, the results on Resnet50 validate the scalability of our method.

\textbf{Representation Quality.}
% 1
While interpretable models often suffer from a degradation in downstream task performance, BayesNCL remarkably breaks this trade-off. 
% 2
As shown in Table \ref{tab:acc main}, our method maintains or exceeds the Linear Probe accuracy of the baselines. 
% 3
On ImageNet-100, BayesNCL achieves \textbf{70.44\%} Top-1 accuracy, surpassing both SimCLR (68.31\%) and NCL (69.63\%). 
% 4
Besides, we also verified the quality of the representations using image retrieval on the feature selection task, and the results in Figure \ref{fig:selected dimensions} show that our method outperforms NCL in multiple dimensions (for more details, please refer to Appendix \ref{app: additional experimental results}).
% 5
We credit this performance gain to the ``denoising" and ``information concentration" capabilities of the gating mechanism. 
% 6
Specifically, by removing distracting high-frequency features and compressing discriminative information into a smaller number of dimensions, the mechanism yields a cleaner representation vector with significantly enhanced representational power of feature vectors.
% 7
In addition, we also visually analyzed the features suppressed by the gating head in Appendix \ref{app:visualize gated dim}, and we found that the gating head indeed learned to suppress features that frequently appeared in the high-frequency background or puzzled the model.

\textbf{Background Preservation.}
% 1
A natural concern is whether suppressing high-frequency common features damages the model's ability to represent background information.
% 2
To test this, we evaluate frozen ImageNet-100-pretrained representations on background classification using Waterbirds~\citep{Sagawa2020Distributionally}, where a linear classifier is trained to predict the background label rather than the foreground object.
% 3
As shown in Table~\ref{tab:background_classification}, BayesNCL maintains a background classification accuracy comparable to NCL and even improves it slightly (93.38\% vs. 93.33\%).
% 4
This indicates that BayesNCL does not discard coherent background information; instead, it preserves representational capacity while reducing the harmful foreground-background entanglement that causes optimization conflict.

\begin{table}[htbp]
  \centering
  \caption{Linear probing accuracy for background classification on the Waterbirds dataset. The models are pre-trained on ImageNet-100.}
  \label{tab:background_classification}
  \begin{tabular}{lc}
    \toprule
    \textbf{Method} & \textbf{Background Accuracy (\%)} \\
    \midrule
    NCL & 93.33 \\
    BayesNCL (Ours) & \textbf{93.38} \\
    \bottomrule
  \end{tabular}
\end{table}

\textbf{Gradient Dynamics.}
% 1
To further test whether BayesNCL improves interpretability by resolving the optimization conflict proposed in Section~\ref{sec:method}, we analyze feature-wise gradient dynamics during pre-training on CIFAR-100.
% 2
For each feature dimension, we compute Spearman correlations among Activation Frequency (AF), Gradient Variance (GV), and Semantic Consistency (SC) every 10 epochs.
% 3
As shown in Table~\ref{tab:gradient_oscillation}, NCL exhibits a consistently strong positive AF-GV correlation, indicating that frequently activated dimensions tend to suffer larger gradient variance.
% 4
At the same time, its negative GV-SC correlation shows that unstable gradients are associated with poorer semantic specialization.
% 5
BayesNCL substantially weakens these harmful correlations, especially the AF-GV and GV-SC links, which supports our mechanism-level claim that probabilistic gating suppresses prevalent conflict-inducing features and stabilizes the learning of semantic dimensions.

\textbf{Computational Efficiency.}
% 1
We also evaluate whether the Bayesian Gating Head and the Straight-Through Estimator introduce meaningful computational overhead.
% 2
As shown in Table~\ref{tab:computational_cost}, BayesNCL adds only a small cost over NCL on both CIFAR-100 and ImageNet-100.
% 3
For example, on CIFAR-100, the FLOPs increase only from 1.416G to 1.419G, while the training time increases from 70.95 to 75.12 minutes.
% 4
Given the substantial gains in semantic consistency and the maintained downstream accuracy, this overhead represents a favorable trade-off between interpretability and efficiency.

\begin{table}[htbp]
  \centering
  \caption{\textbf{Computational efficiency.} Comparison of training complexity and average running time between NCL and BayesNCL.}
  \label{tab:computational_cost}
  \resizebox{\columnwidth}{!}{%
  \begin{tabular}{llcc}
    \toprule
    \textbf{Dataset} & \textbf{Method} & \textbf{Running Time (min)} & \textbf{FLOPs} \\
    \midrule
    \multirow{2}{*}{CIFAR-100} 
    & NCL & 70.95 & 1.416G \\
    & BayesNCL (Ours) & 75.12 & 1.419G \\
    \midrule
    \multirow{2}{*}{ImageNet-100} 
    & NCL & 193.78 & 3.731G \\
    & BayesNCL (Ours) & 218.53 & 3.815G \\
    \bottomrule
  \end{tabular}
  }
\end{table}

% =================================
\subsection{Ablation Study}
\label{subsec:ablation}
% =================================

% 1
In this section, we validate the design of BayesNCL, focusing on how probabilistic gating resolves the optimization conflict. 
% 2
Detailed experimental setups and additional analyses are provided in Appendix \ref{app:detailed analysis of ablation experiments}.

\textbf{Gating Mechanism and Optimization Stability.} 
% 1
As shown in Table \ref{tab:ablation_main}, we have thoroughly compared the different settings of gating mechanisms and the trade-offs between complexity of gating heads.
% 1
We find that strict binary filtering is essential. 
% 2
Replacing the Hard Gating with Soft Gating causes a catastrophic drop in Semantic Consistency (22.0\% $\to$ 8.5\%). 
% 3
This further confirms our hypothesis: merely down-weighting conflicting background features leaves residual gradients that sustain oscillation; strictly zeroing them out acts as a necessary circuit breaker. 
% 4
Furthermore, removing the gradient detachment degrades performance, indicating a \textit{misalignment of objectives} where let the backbone minimize the sparsity loss $L_{sparsity}$ directly rather than learning robust representations. 
% 5
Finally, regarding the complexity of the gating head, we found that a 2-layer MLP is optimal capacity for the Gating Head, balancing non-linear selection capability with training stability.

\textbf{Dimensionality and Disentanglement Efficiency.} 
% 1
Inspired by SAE, which indicates the need for a sufficient number of feature channels for better disentanglement~\citep{elhage2022toy, cunningham2023sparse, li2026less}, we also investigate the relationship between latent width and semantic consistency in Figure \ref{fig:consistency with feature dimension variation}. 
% 2
We find that while NCL relies on expanding dimensions to mitigate feature superposition, it scales inefficiently. 
% 3
Instead, BayesNCL achieves superior disentanglement (36.14 consistency) at 2048 dimensions, twice that of NCL. This validates that dynamic probabilistic gating is a more parameter-efficient solution to the optimization conflict than simply increasing model capacity.

\textbf{Hyperparameter Sensitivity.} 
% 1
Sensitivity analysis on $\rho$ and $\lambda$ in Fig. \ref{fig:sensitivity to piror and lambda} reveals an ``Inverted-U'' relationship. 
% 2
Moderate sparsity acts as a \textbf{semantic denoiser}, filtering common noise without discarding information. 
% 3
However, excessive regularization ($\lambda > 4e^{-5}$) forces a trade-off where the model sacrifices representation quality to satisfy the prior, highlighting the need for the KL term to act as a guide rather than a hard constraint.
% 4
We also validate the learning rate of the Gating Head in Table \ref{tab:gate_lr_sensitivity}, where the results show that setting it to $0.25\times$ the backbone learning rate is the most appropriate choice.

\section{Conclusion}
\label{sec:conclusion}

% 1
We identify that deterministic similarity measures in contrastive learning suffer from optimization conflicts, where shared background features induce semantic entanglement. 
% 2
To resolve this, we propose BayesNCL, which utilizes a variational gating mechanism to approximate \textit{Inverse Prevalence Weighting}, dynamically suppressing high-frequency noise. 
% 3
Our approach achieves state-of-the-art semantic consistency on ImageNet-100, demonstrating that probabilistic constraints can resolve the performance-interpretability trade-off, paving the way for rigorous and disentangled Self-Supervised Learning.

\section*{Acknowledgements}

We thank the anonymous reviewers for their valuable comments and suggestions. This work is partially supported by the MBZUAI Research Fund BF0100.

\section*{Impact Statement}

This paper presents work whose goal is to advance the field of Machine Learning. There are many potential societal consequences of our work, none which we feel must be specifically highlighted here.

% \nocite{langley00}

\bibliography{example_paper}
\bibliographystyle{icml2026}

\newpage
\appendix
\onecolumn % 删除的话变成单栏了

\section{Notation}
\label{app:notation}

Table~\ref{tab:notation} summarizes the mathematical notations used throughout this paper. We categorize these symbols into three groups: (1) \textbf{General Framework}, which covers the standard contrastive learning setup and latent class assumptions; (2) \textbf{Bayesian Gating Mechanism}, which introduces the variables specific to BayesNCL used to resolve the \textit{Optimization Conflict} via probabilistic filtering; and (3) \textbf{Optimization \& Metrics}, detailing the loss components and hyperparameters governing the sparsity prior.

\begin{table*}[h]
\caption{Summary of Notations.}
\label{tab:notation}
\begin{center}
\begin{small}
\begin{tabular}{l p{0.75\linewidth}}
\toprule
\textbf{Symbol} & \textbf{Description} \\
\midrule
\multicolumn{2}{c}{\textit{General Framework \& Latent Semantics}} \\
\midrule
$x, \bar{x}$ & An input image and its underlying natural source (before augmentation). \\
$x^+, x^-$ & Positive sample (augmented view of $x$) and negative sample (independent instance). \\
$f(\cdot)$ & The encoder backbone. \\
$z \in \mathbb{R}^K_{\ge 0}$ & The raw, non-negative feature representation. \\
$C = \{c_1, ..., c_K\}$ & The set of discrete latent semantic classes (e.g., objects, attributes). \\
$K$ & The dimensionality of the feature space. \\
$\phi(x)$ & The theoretical optimal representation vector approximating the posterior $P(c|x)$. \\
$\pi_k$ & \textbf{Prevalence} of the $k$-th latent class, defined as global prior $P(c_k)$. High $\pi_k$ indicates common features (e.g., background noise) causing optimization conflicts. \\
\midrule
\multicolumn{2}{c}{\textit{Bayesian Gating Mechanism (Ours)}} \\
\midrule
$g_\theta(\cdot)$ & The \textbf{Gating Head}, a lightweight MLP parameterized by $\theta$. \\
$\alpha \in (0, 1)^K$ & The predicted Bernoulli parameters (soft gating probabilities) for each dimension, where $\alpha = \sigma(g_\theta(x))$. \\
$m \in \{0, 1\}^K$ & The binary gating mask sampled from the posterior distribution. \\
$m_{hard}$ & The discretized mask used for the forward pass, calculated as $\mathbb{I}(\alpha > 0.5)$. \\
$m_{train}$ & The mask used during backpropagation via the \textbf{Straight-Through Estimator (STE)}, allowing gradients to flow through discrete decisions. \\
$\tilde{z}$ & The \textbf{Gated Feature Representation}, computed as $\tilde{z} = z \odot m_{train}$. This is the input to the contrastive loss, free from conflicting high-frequency noise. \\
\midrule
\multicolumn{2}{c}{\textit{Optimization \& Metrics}} \\
\midrule
$\mathcal{L}_{align}$ & The Masked InfoNCE loss, calculated using gated features $\tilde{z}$. \\
$\mathcal{L}_{sparsity}$ & The KL-divergence regularization term forcing the mask distribution towards the prior. \\
$\rho$ & The \textbf{Sparsity Prior}, a hyperparameter defining the expected activation rate (Bernoulli parameter) of the prior distribution $P(m)$. \\
$\lambda$ & Weighting coefficient for the sparsity regularization term. \\
$\tau$ & Temperature parameter for the contrastive loss. \\
$SC_j, SE_j$ & Semantic Consistency and Semantic Entropy metrics for the $j$-th feature dimension. \\
\bottomrule
\end{tabular}
\end{small}
\end{center}
\end{table*}

\section{Related Works}
\label{app:related works}

% 1
While the main text focuses on the intersection of non-negativity and optimization conflicts, the paradigm of Contrastive Learning (CL) has evolved through several distinct phases. 
% 2
In this section, we provide a broader historical perspective, tracing the trajectory from statistical estimation principles to modern dense and multimodal frameworks.

\paragraph{Theoretical Origins: From NCE to InfoNCE}
% 1
The mathematical foundation of modern contrastive learning predates the deep learning boom. 
% 2
It originates from \textbf{Noise Contrastive Estimation (NCE)}~\citep{gutmann2010noise}, proposed as a method to estimate energy-based models by discriminating observed data from noise samples, avoiding the calculation of intractable partition functions.
% 3
This principle was first popularized in representation learning by \textbf{Word2Vec}~\citep{mikolov2013distributed}, which utilized negative sampling to efficiently learn semantic word vectors. 
% 4
In the visual domain, early self-supervised approaches such as \textbf{Context Encoders}~\citep{pathak2016context} and \textbf{Colorization}~\citep{zhang2016colorful} implicitly relied on contrastive signals to solve pretext tasks.
% 5
However, the pivotal moment for modern CL was the introduction of \textbf{Contrastive Predictive Coding (CPC)}~\citep{oord2018representation}, which generalized NCE into the widely adopted \textbf{InfoNCE} objective, establishing a universal framework for mutual information maximization.

\paragraph{The Instance Discrimination Era}
% 1
The field subsequently shifted from specific pretext tasks to the generalized paradigm of \textbf{Instance Discrimination}~\citep{wu2018unsupervised}, where every image is treated as its own distinct class. 
% 2
A primary challenge in this era was scaling negative samples efficiently. \textbf{MoCo}~\citep{he2020momentum} addressed this via a momentum-updated queue, decoupling the negative sample size from the mini-batch size. 
% 3
Conversely, \textbf{SimCLR}~\citep{chen2020simple} demonstrated that end-to-end training was feasible given sufficiently large batches and strong augmentations, such as Gaussian blur and color jitter. 
% 4
Following these architectural advances, research began to focus on the quality of the contrastive signal. 
% 5
Approaches like \textbf{Hard Negative Mixing}~\citep{kalantidis2020hard} and HCL~\citep{robinson2020contrastive} identified that not all negatives are equal, proposing strategies to synthesize or up-weight harder negatives. 
% 6
While these methods improved gradient signaling, they largely operated at the sample level, a limitation our BayesNCL addresses by identifying ``conflicting'' features within the representation itself.

\paragraph{Beyond Global Features: Dense and Pixel-Level CL}
% 1
Standard contrastive frameworks typically pool the entire image into a single global vector, which inevitably destroys spatial information and fine-grained details. 
% 2
To adapt contrastive learning for dense prediction tasks like segmentation and detection, the field evolved towards local contrast strategies. 
% 3 
\textbf{DenseCL}~\citep{wang2021dense} and \textbf{PixPro}~\citep{xie2021propagate} extended the paradigm to the pixel or patch level, ensuring that local features remain consistent across views. 
% 4
Similarly, \textbf{DetCo}~\citep{xie2021detco} introduced hierarchical contrastive learning specifically optimized for object detection. 
% 5
Crucially, these dense methods face an exacerbated version of the ``Optimization Conflict'' we identify in the main text: local patches often consist entirely of background elements (e.g., sky, grass), making the disentanglement of common features from semantic objects even more critical to prevent false repulsion.

\paragraph{Multimodal Contrastive Learning}
% 1
Perhaps the most impactful expansion of CL has been the alignment of vision with language.
% 2
Models such as \textbf{CLIP}~\citep{radford2021learning} and \textbf{ALIGN}~\citep{jia2021scaling} scaled the contrastive objective to massive web-scale datasets, learning to align image embeddings with corresponding text embeddings, while \textbf{SLIP}~\citep{mu2022slip} further combined self-supervised image contrast with image-text supervision. 
% 3
In these multimodal settings, the polysemantic nature of visual features becomes a major bottleneck, for instance, a caption may mention a dog, but the image contains both a dog and grass.
% 4
This creates a misalignment where the visual embedding contains information irrelevant to the text. 
% 5
Our BayesNCL gating mechanism offers a theoretical pathway to cleaner cross-modal alignment by potentially filtering non-corresponding visual tokens during the alignment process.
% 20251214
% 4. Rethinking NCL 中的具体推导

\section{Deriving the Inverse-Prevalence Weighted Similarity}
\label{app: inverse-prevalence weighted similarity}

% 1
To formally understand how overlapping features affect representation learning, we revisit the joint probability of a sample pair $(x, x')$ based on the latent class model. 
% 2
By decomposing the joint distribution via latent classes $\mathcal{C}$, we have:
\begin{align}
    p(x, x') &= \sum_{c \in \mathcal{C}} p(x, x'|c)p(c) = \sum_{c \in \mathcal{C}} p(x|c)p(x'|c)p(c) \nonumber \\
             &= \sum_{c \in \mathcal{C}} \frac{p(c|x)p(x)}{p(c)} \frac{p(c|x')p(x')}{p(c)} p(c) \nonumber \\
             &= p(x) p(x') \sum_{c \in \mathcal{C}} \frac{p(c|x) p(c|x')}{P(c)}.
\end{align}
% 3
Taking the logarithm, the log-likelihood of observing the pair $(x, x')$ decomposes into marginal priors and an interaction term:
\begin{equation}
    \log p(x, x') = \underbrace{\log p(x) + \log p(x')}_{\text{Sample Priors}} + \underbrace{\log \left( \sum_{k=1}^{m} \frac{p(c_k|x) p(c_k|x')}{P(c_k)} \right)}_{\text{Semantic Alignment}}.
\end{equation}
% 4
In the context of NCL, the features are interpreted as posterior probabilities, i.e., $\phi_k(x) \approx p(c_k|x)$. Let $\pi_k = P(c_k)$ denote the global prior probability (prevalence) of the $k$-th latent class. The interaction term suggests a theoretical form for \textbf{inverse-prevalence weighted similarity}:
\begin{equation}
\label{eq:appendix_weighted_sim}
    \mathcal{S}_{\text{ideal}}(x, x') = \sum_{k=1}^{m} \frac{\phi_k(x) \phi_k(x')}{\pi_k}.
\end{equation}
% 5
Eq.~\ref{eq:appendix_weighted_sim} reveals a fundamental principle from Information Theory: rarer features should carry higher weight.
\begin{itemize}
    \item For \textbf{distinctive semantics} (e.g., specific objects), $\pi_k$ is small, amplifying their contribution to similarity.
    \item For \textbf{common features} (e.g., generic `blue sky'), $\pi_k$ is large, necessitating a down-weighting mechanism to reduce their dominance.
\end{itemize}
\section{Derivation of the Variational Objective}
\label{appendix:elbo_derivation}

In this section, we provide the rigorous derivation of the BayesNCL objective function from the perspective of variational inference.

\subsection{Generative Model Setup}
We assume the data generation process for a positive pair $(x, x^+)$ is governed by a latent binary mask variable $m \in \{0, 1\}^K$. The generative process is defined as follows:

\begin{enumerate}
    \item \textbf{Prior:} The latent mask $m$ is drawn from a sparse Bernoulli prior:
    \begin{equation}
        m \sim p(m) = \prod_{k=1}^K \text{Bern}(m_k; \rho),
    \end{equation}
    where $\rho$ is the sparsity hyperparameter.
    
    \item \textbf{Likelihood:} Given the mask $m$, the feature representations $z = f(x)$ and $z^+ = f(x^+)$ are generated such that their similarity is maximized on the active dimensions. We model the conditional likelihood of the positive sample $z^+$ given the anchor $z$ and mask $m$ using a log-linear model (related to the Von Mises-Fisher distribution on the hypersphere):
    \begin{equation}
        p(z^+ | z, m) \propto \exp \left( \frac{(z \odot m)^\top (z^+ \odot m)}{\tau} \right),
    \end{equation}
    where $\odot$ denotes the element-wise product and $\tau$ is the temperature parameter.
\end{enumerate}

\subsection{Evidence Lower Bound (ELBO)}
Our goal is to maximize the marginal log-likelihood of observing the positive pairs, marginalizing over the latent mask $m$. Since the integral $\log \int p(z^+ | z, m) p(m) dm$ is intractable, we introduce a variational posterior $q_\theta(m|z)$ (parameterized by the Gating Head) to approximate the true posterior.

Using Jensen's Inequality, we derive the Evidence Lower Bound (ELBO):
\begin{align}
    \log p(z^+ | z) &= \log \int p(z^+ | z, m) p(m) dm \\
    &= \log \int q_\theta(m|z) \frac{p(z^+ | z, m) p(m)}{q_\theta(m|z)} dm \\
    &\ge \int q_\theta(m|z) \log \left( \frac{p(z^+ | z, m) p(m)}{q_\theta(m|z)} \right) dm \quad \text{(Jensen's Inequality)} \\
    &= \mathbb{E}_{m \sim q_\theta} [\log p(z^+ | z, m)] - D_{KL}(q_\theta(m|z) \| p(m)) \\
    &=  - \mathcal{L}_{reconstruction} - \mathcal{L}_{regularization}.
\end{align}

\subsection{Connecting to BayesNCL Loss}
We now map the terms in the ELBO to our specific loss functions.

\textbf{1. The Sparsity Term ($\mathcal{L}_{sparsity}$):}
The second term is explicitly the KL divergence between the predicted mask distribution and the Bernoulli prior. Since we assume factorized distributions, this matches Eq. 10 in the main text:
\begin{equation}
    \mathcal{L}_{regularization} = \sum_{k=1}^K D_{KL}(\text{Bern}(\alpha_k) \| \text{Bern}(\rho)).
\end{equation}

\textbf{2. The Alignment Term ($\mathcal{L}_{align}$):}
The first term maximizes the expected log-likelihood of the positive pair under the mask. In Contrastive Learning, directly maximizing the likelihood is difficult due to the partition function. However, it is well-established that the InfoNCE loss is a lower bound on the mutual information, which effectively maximizes this log-likelihood ratio against noise samples:
\begin{equation}
    \mathbb{E}_{m \sim q_\theta} [\log p(z^+ | z, m)] \approx - \mathcal{L}_{InfoNCE}(\tilde{z}, \tilde{z}^+),
\end{equation}
where $\tilde{z} = z \odot m$. By utilizing the Straight-Through Estimator (STE), we approximate the expectation over $m \sim q_\theta$ by using the hard mask $m_{hard}$ during the forward pass while backpropagating gradients through the soft probabilities $\alpha$.

Combining these, minimizing the negative ELBO is equivalent to minimizing our total loss:
\begin{equation}
    \mathcal{L}_{total} = \mathcal{L}_{align} + \lambda \mathcal{L}_{sparsity}.
\end{equation}
This concludes the derivation.

\section{Theoretical Proof}
\label{appendix:theoretical proof}

\subsection{Proof of Theorem \ref{thm:filtering}}
\label{appendix:proof of sparsity as semantic filtering}

\begin{proof}
Consider a single feature dimension $k$ with global prevalence $\pi_k = P(m_k = 1)$. Let the local objective function for this dimension be $J_k = \Delta L_{align}(k) - \lambda \Delta L_{sparsity}(k)$, where $\Delta$ denotes the contribution of dimension $k$ to the total loss.

\textbf{Utility Analysis ($\Delta L_{align}$):}
In contrastive learning, a feature $k$ is discriminative if it is present in the positive pair $(x, x^+)$ but absent in the negative samples $\{x^-_j\}$. Let $U_k$ be the utility of feature $k$. The probability that feature $k$ successfully discriminates a positive pair from a random negative sample is:
\begin{equation}
    P(\text{Discrimination}_k) \propto P(k \in x, k \in x^+) \cdot P(k \notin x^-) \approx \pi_k \cdot (1 - \pi_k).
\end{equation}
Thus, the expected information gain $\Delta L_{align}(k)$ is proportional to $\pi_k(1 - \pi_k)$. Note that as $\pi_k \to 1$ (common backgrounds) or $\pi_k \to 0$ (extremely rare noise), the utility $U_k$ approaches zero.

\textbf{Cost Analysis ($\Delta L_{sparsity}$):}
The sparsity term penalizes the divergence between the posterior activation $\alpha_k$ and the sparse prior $\rho$. For a feature that the model potentially keeps active ($\alpha_k \approx 1$), the cost is:
\begin{equation}
    \Delta L_{sparsity}(k) = D_{KL}(\text{Bern}(1) \| \text{Bern}(\rho)) = \log \frac{1}{\rho}.
\end{equation}
Since $\rho$ is a constant less than 1, this cost is always a positive penalty.

\textbf{Optimal Gating Decision:}
The gating head will activate dimension $k$ (i.e., $m_k = 1$) only if the utility outweighs the cost:
\begin{equation}
    \gamma \cdot \pi_k(1 - \pi_k) > \lambda \cdot \log \frac{1}{\rho},
\end{equation}
where $\gamma$ is a scaling factor. For high-prevalence background features where $\pi_k \to 1$, the term $(1 - \pi_k)$ vanishes, leading to:
\begin{equation}
    \lim_{\pi_k \to 1} \gamma \cdot \pi_k(1 - \pi_k) = 0 < \lambda \cdot \log \frac{1}{\rho}.
\end{equation}
In this regime, the inequality fails, and the optimal variational posterior $q_\theta(m_k|x)$ collapses to 0. This proves that BayesNCL mathematically necessitates the suppression of high-frequency common features, effectively approximating the Inverse Prevalence Weighting $1/\pi_k$ via hard thresholding.
\end{proof}

% \textbf{Refined Proof of Proposition 4.1.}
% Let $I(Y; Z_k)$ denote the mutual information between the semantic label $Y$ and the feature dimension $Z_k$. We model the "Utility" of keeping a feature active as its contribution to the InfoNCE objective, which lower-bounds the mutual information:
% \begin{equation}
%     \text{Utility}_k \propto I(Y; Z_k) = H(Z_k) - H(Z_k | Y).
% \end{equation}
% For a common background feature with global prevalence $\pi_k \to 1$, the entropy $H(Z_k) \approx 0$ (since it is almost always active). Consequently, the information gain vanishes: $\lim_{\pi_k \to 1} I(Y; Z_k) = 0$.

% However, the "Cost" of keeping this feature active in BayesNCL is determined by the KL-divergence with the sparse prior $\rho \ll 1$. If the posterior activation probability is $\alpha_k$, the marginal cost is:
% \begin{equation}
%     \frac{\partial \mathcal{L}_{sparsity}}{\partial \alpha_k} = \log \frac{\alpha_k(1-\rho)}{\rho(1-\alpha_k)}.
% \end{equation}
% For $\alpha_k \to 1$ (keeping the feature), the cost approaches $\log(1/\rho) > 0$.
% Since the Utility approaches 0 while the Cost remains strictly positive, the only global minimum for the objective $\mathcal{L}_{total}$ is to set $\alpha_k \to 0$. Thus, BayesNCL acts as a rigorous high-pass semantic filter. \qed

\subsection{Proof of Theorem \ref{thm:error_bound}}
\label{appendix:proof of fp error reduction}

\textbf{Problem Setup \& Definitions.}
Let $(x, x^-)$ be a negative pair sampled during training. To rigorously define ``False Positive Error," we assume the existence of latent semantic labels $y, y^-$. We focus on the specific case of \textbf{Background-Induced Hard Negatives}, defined by two conditions:
\begin{enumerate}
    \item \textbf{Semantically Disjoint:} $y \neq y^-$ (e.g., Bird vs. Plane). Ideally, their similarity should be zero.
    \item \textbf{Background Overlap:} They share high activation on nuisance features $\mathcal{S}_{\text{noise}}$ (e.g., Blue Sky).
\end{enumerate}

Based on Theorem \ref{thm:filtering}, we partition the feature space into Semantic Signal $\mathcal{S}_{\text{signal}}$ (low prevalence) and Background Noise $\mathcal{S}_{\text{noise}}$ (high prevalence).

\textbf{Error in Deterministic NCL.}
In standard NCL, features are non-negative ($z \ge 0$). The similarity score (Error) for this disjoint pair is:
\begin{equation}
    E_{\text{NCL}}(x, x^-) = \underbrace{\sum_{k \in \mathcal{S}_{\text{signal}}} z_k z_k^-}_{\text{Semantic Coincidence}} + \underbrace{\sum_{k \in \mathcal{S}_{\text{noise}}} z_k z_k^-}_{\text{Background Spuriousness}}.
\end{equation}
Since $y \neq y^-$, the semantic coincidence term is expected to be small (assuming disentanglement). However, since $k \in \mathcal{S}_{\text{noise}}$ represents high-prevalence common features, the term $\sum_{k \in \mathcal{S}_{\text{noise}}} z_k z_k^-$ is strictly positive and large. This high similarity constitutes a significant \textit{False Positive Error}, causing the model to pull apart the ``Blue Sky" feature, which conflicts with positive pairs that share the same sky.

\textbf{Error in BayesNCL.}
BayesNCL computes similarity on $\tilde{z} = z \odot m$. The expected error is:
\begin{equation}
    \mathbb{E}[E_{\text{Bayes}}] = \sum_{k \in \mathcal{S}_{\text{signal}}} \mathbb{E}[\alpha_k \alpha_k^-] z_k z_k^- + \sum_{k \in \mathcal{S}_{\text{noise}}} \mathbb{E}[\alpha_k \alpha_k^-] z_k z_k^-.
\end{equation}
\textbf{Applying Theorem \ref{thm:filtering}:}
For features $k \in \mathcal{S}_{\text{noise}}$, the global prevalence $\pi_k$ is high. Theorem \ref{thm:filtering} proves that the sparsity constraint forces the gating probability $\alpha_k(x)$ to approach zero for such features.
Let $\alpha_k(x) \le \epsilon$ for $k \in \mathcal{S}_{\text{noise}}$. The noise term becomes:
\begin{equation}
    \sum_{k \in \mathcal{S}_{\text{noise}}} \mathbb{E}[\alpha_k \alpha_k^-] z_k z_k^- \le \epsilon^2 \sum_{k \in \mathcal{S}_{\text{noise}}} z_k z_k^- \approx 0.
\end{equation}

\textbf{Conclusion.}
Comparing the two errors:
\begin{equation}
    \Delta E = \mathbb{E}[E_{\text{NCL}}] - \mathbb{E}[E_{\text{Bayes}}] \approx \sum_{k \in \mathcal{S}_{\text{noise}}} z_k z_k^-.
\end{equation}
Since $z \ge 0$, this difference is strictly positive.
This proves that BayesNCL specifically suppresses the similarity contribution arising from high-prevalence background features. For semantically disjoint pairs ($y \neq y^-$), this directly translates to a reduction in False Positive Error, thereby mitigating the optimization conflict. \hfill $\square$

\subsection{Proof of Theorem \ref{thm:ib_bound}}
\label{appendix:proof of ib}

\textbf{Goal.} We demonstrate that minimizing $\mathcal{L}_{\text{sparsity}}$ minimizes an upper bound on the mutual information $I(\tilde{Z}; X)$.

\textbf{Decomposition of Mutual Information.}
Since $\tilde{Z}$ is a deterministic function of $Z$ and $M$, and $M$ is conditioned on $X$, we analyze the joint information. The mutual information is upper bounded by the joint entropy of the representation and the mask:
\begin{equation}
    I(\tilde{Z}; X) \le H(\tilde{Z}) \le H(\tilde{Z}, M).
\end{equation}
Using the chain rule of entropy, we decompose $H(\tilde{Z}, M)$:
\begin{equation}
    H(\tilde{Z}, M) = H(M) + H(\tilde{Z} | M).
\end{equation}

\textbf{Bounding the Discrete Entropy $H(M)$.}
Assuming dimensions are conditionally independent given $X$ (mean-field approximation used in our variational posterior), the entropy of the mask is the sum of binary entropies. Let $p_k = P(m_k=1 | X)$.
\begin{equation}
    H(M) \le \sum_{k=1}^K \mathcal{H}_b(p_k),
\end{equation}
where $\mathcal{H}_b(p) = -p \log p - (1-p) \log (1-p)$. As $p_k \to 0$ (enforced by sparsity), $\mathcal{H}_b(p_k) \to 0$.

\textbf{Bounding the Conditional Differential Entropy $H(\tilde{Z} | M)$.}
The term $H(\tilde{Z} | M)$ represents the entropy of the features \textit{given} their activation status.
\begin{equation}
    H(\tilde{Z} | M) = \sum_{k=1}^K \left[ P(m_k=0) H(\tilde{Z}_k | m_k=0) + P(m_k=1) H(\tilde{Z}_k | m_k=1) \right].
\end{equation}
\begin{itemize}
    \item If $m_k=0$, then $\tilde{z}_k = 0$ deterministically. The entropy is 0 (or $-\infty$ in differential terms, but effectively 0 information content relative to the subspace).
    \item If $m_k=1$, then $\tilde{z}_k = z_k$. Since we employ normalization and bounded activation functions (ReLU/Sigmoid) in the projector, the differential entropy of active features is bounded by a constant $C_{\text{cont}}$ (related to the volume of the feature support).
\end{itemize}
Thus:
\begin{equation}
    H(\tilde{Z} | M) \le \sum_{k=1}^K p_k \cdot C_{\text{cont}}.
\end{equation}

\textbf{The Role of $\mathcal{L}_{\text{sparsity}}$.}
Our regularization term minimizes $D_{\text{KL}}(\text{Bern}(p_k) \| \text{Bern}(\rho))$. This forces the posterior activation probability $p_k$ towards the small prior $\rho$.
Combining the terms:
\begin{equation}
    I(\tilde{Z}; X) \le \sum_{k=1}^K \underbrace{\mathcal{H}_b(p_k)}_{\text{Mask Cost}} + \underbrace{p_k \cdot C_{\text{cont}}}_{\text{Feature Cost}}.
\end{equation}
Both terms are monotonically increasing with $p_k$ (for $p_k < 0.5$). Therefore, minimizing $\mathcal{L}_{\text{sparsity}}$ directly minimizes the upper bound on mutual information.

\textbf{Conclusion.}
BayesNCL implements a bottleneck by restricting the \textbf{expected number of active dimensions} ($\sum p_k$). Unlike standard NCL which allows information to flow through all $K$ channels, BayesNCL constricts the flow to $\approx K\rho$ channels. This mathematically necessitates the ``Inverse Prevalence Weighting" effect: to maintain low contrastive loss under this tight capacity constraint, the encoder must discard high-prevalence, low-information features (backgrounds) in favor of rare, high-information features (objects). \hfill $\square$

% =================================
\section{Detailed Evaluation Metrics}
\label{app:metrics}
% =================================

To quantitatively evaluate the alignment between learned latent dimensions and semantic categories, we employ Semantic Consistency (SC) and Semantic Entropy (SE). Let $\mathcal{D} = \{(x_i, y_i)\}_{i=1}^N$ denote the dataset with $C$ classes. We define a feature dimension $j$ as \textit{active} for a sample $x$ if its absolute activation magnitude exceeds a threshold $\epsilon = 10^{-5}$, denoted by the indicator $\mathbb{I}(|z_j(x)| > \epsilon)$. Let $\mathcal{S}_j = \{ (x, y) \in \mathcal{D} \mid |z_j(x)| > \epsilon \}$ be the set of samples where dimension $j$ is active.

\subsection{Semantic Consistency (SC)}
Following \citet{wang2024non}, SC measures the dominance of the most frequent class for a given feature dimension $j$:
$$
\text{SC}_j = \max_{c \in \{1, \dots, C\}} \frac{ \sum_{(x,y) \in \mathcal{S}_j} \mathbb{I}(y = c) }{ |\mathcal{S}_j| }
$$
Intuitively, SC determines if a feature is a ``class detector."

\subsection{Semantic Entropy (SE)}
While SC focuses only on the dominant class, Semantic Entropy captures the full distribution of feature activations. We define SE as the entropy of the class distribution $p_j$ for dimension $j$:
$$
\text{SE}_j(p) = - \sum_{c=1}^C p_{j,c} \log p_{j,c}
$$
To analyze feature semantics from different physical perspectives (energy, intensity, and frequency), we instantiate the distribution $p_j$ in three ways:

\begin{itemize}
    \item \textbf{SE-Sum ($H_{\text{sum}}$):} Measures the concentration of total activation energy.
    $$
    p_{j,c}^{(\text{sum})} = \frac{ \sum_{(x,y) \in \mathcal{S}_j, y=c} |z_j(x)| }{ \sum_{(x',y') \in \mathcal{S}_j} |z_j(x')| }
    $$
    
    \item \textbf{SE-Mean ($H_{\text{mean}}$):} Isolates average activation intensity, normalizing for class imbalance.
    $$
    p_{j,c}^{(\text{mean})} = \frac{ \bar{z}_{j,c} }{ \sum_{k=1}^C \bar{z}_{j,k} }
    $$
    where $\bar{z}_{j,c} = \frac{1}{N_c} \sum_{(x,y) \in \mathcal{S}_j, y=c} |z_j(x)|$ and $N_c$ is the number of samples in class $c$.
    
    \item \textbf{SE-Freq ($H_{\text{freq}}$):} Assesses the breadth of feature occurrence (prevalence). This is particularly sensitive to ``common features" (e.g., background textures) that create optimization conflicts.
    $$
    p_{j,c}^{(\text{freq})} = \frac{ \sum_{(x,y) \in \mathcal{S}_j} \mathbb{I}(y=c) }{ |\mathcal{S}_j| }
    $$
\end{itemize}

For all metrics, we report the mean value across all active dimensions $\mathcal{A}$, i.e., $\text{SE} = \frac{1}{|\mathcal{A}|} \sum_{j \in \mathcal{A}} \text{SE}_j$.

\section{Detailed Analysis of Ablation Experiments}
\label{app:detailed analysis of ablation experiments}

We conduct extensive ablations to validate the design choices of the Bayesian Gating mechanism, specifically investigating how discreteness, gradient flow, and model capacity influence the resolution of the optimization conflict.

\textbf{Is strict binary filtering necessary?} (Table \ref{tab:ablation_main} \& \ref{tab:interp soft detach}).
% 1
We investigate whether the gating mechanism requires a hard threshold or if soft down-weighting is sufficient. We compare our default Hard Gating (via Straight-Through Estimator) against a Soft Gating variant using continuous sigmoid activations. 
% 2
Results indicate that \textbf{Hard Gating is non-negotiable} for interpretability. The Soft Gating variant suffers a catastrophic drop in Semantic Consistency (8.54\% vs. 22.02\% on CIFAR-100). 
% 3
This validates our core hypothesis regarding the optimization conflict: merely down-weighting conflicting features (e.g., background noise) leaves residual gradients that sustain oscillation. Strictly zeroing out these dimensions ($z_{gated} = z \odot m_{hard}$) acts as a necessary circuit breaker, forcing the model to rely solely on discriminative semantics.

\textbf{Is gradient detachment essential for stable optimization?} (Table \ref{tab:ablation_main} \& \ref{tab:interp soft detach}).
% 1
We further analyze the necessity of detaching gradients before the Gating Head input. 
% 2
In the ``w.o. detach'' variant, we allow gradients from the KL sparsity loss to backpropagate into the backbone encoder.
% 3
The results show that removing the detach operation degrades both consistency and accuracy.
% 4
We attribute this to a \textit{misalignment of objectives}: without detachment, the backbone learns to suppress feature activations globally to trivially satisfy the sparsity prior $L_{sparsity}$, rather than allowing the Gating Head to learn intelligent selection policies. 
% 5
The detach operation ensures the backbone focuses solely on representation learning, while the Gating Head specializes in semantic filtering.

\textbf{What is the optimal complexity for the Gating Head?} (Tables \ref{tab:ablation_main} \& \ref{tab:interp layer}).
% 1
We explore whether the gating policy requires deep non-linear modeling.
% 2
Empirically, a \textbf{2-layer MLP} strikes the optimal balance. 
% 3
While a single linear layer yields high consistency, it causes a notable drop in representation quality (Acc@1), suggesting that linear gating is overly aggressive and may prune informative but non-linear feature dependencies. 
% 4
Conversely, a 3-layer MLP degrades performance on both metrics, likely due to optimization difficulties inherent in training deeper networks with discrete bottlenecks. 
% 5
Thus, a lightweight 2-layer architecture provides sufficient capacity to model feature conditional probabilities without inducing training instability.

\textbf{Does increasing feature dimensionality enhance semantic disentanglement?} (Table \ref{tab:dimensions}).
% 1
Recent findings in Sparse Autoencoders (SAEs) suggest that semantic concepts in neural networks are often polysemantic due to the superposition of features in limited dimensions~\citep{elhage2022toy, cunningham2023sparse, li2026less}. 
% 2
Motivated by this, which suggests wider layers alleviate polysemanticity, we examine consistency across varying dimensions. 
% 3
Standard CL remains rotationally invariant with flat consistency scores, confirming its inability to align dimensions with semantics. 
% 4
NCL shows monotonic improvement as width increases, e.g., $6.73 \to 17.58$ on ImageNet-100, validating that enforcing non-negativity utilizes capacity to disentangle concepts. 
% 5
However, BayesNCL demonstrates superior efficiency, achieving a peak consistency of 36.14 at 2048 dimensions on ImageNet-100---more than double that of NCL. 
% 6
This confirms that probabilistic gating resolves feature conflicts more effectively than simply expanding the latent space, allowing higher disentanglement with fewer parameters.

\textbf{How do sparsity constraints modulate the trade-off?} (Table \ref{tab:rho_sensitivity} \& \ref{tab:lambda_sensitivity}).
% 1
Finally, we analyze the sensitivity to the sparsity prior $\rho$ and KL weight $\lambda$.
% 2
Figure \ref{fig:sensitivity to piror and lambda} reveals an ``Inverted-U'' relationship for the sparsity prior $\rho$. 
% 3
Performance peaks at moderate sparsity; extreme sparsity ($\rho < 0.4$) destroys essential information, while high density ($\rho \to 0.9$) re-introduces entangled background noise.
% 4
Similarly, regarding the KL weight $\lambda$, moderate penalties act as a \textbf{semantic denoiser}, improving both accuracy and consistency. 
% 5
However, excessive regularization ($\lambda > 4e-5$) creates a new conflict where the model sacrifices interpretability to maintain representation quality.
% 6
This confirms that BayesNCL operates best when the prior serves as a gentle guide rather than a hard constraint.
\section{Additional Experimental Results}
\label{app: additional experimental results}

\begin{table*}[htbp]
\caption{Comparing the Impact of different gating mechanisms. $\uparrow$ indicates higher is better, $\downarrow$ indicates lower is better. Act. represents the activation ratio of the feature dimension.}
\label{tab:interp soft detach}
\begin{center}
\begin{small}
\setlength{\tabcolsep}{1.8pt} % 跨双栏可以稍微放宽列距
\begin{tabular}{l ccccc ccccc ccccc}
\toprule
\multirow{2}{*}[-0.5ex]{Method} & \multicolumn{5}{c}{CIFAR-10} & \multicolumn{5}{c}{CIFAR-100} & \multicolumn{5}{c}{ImageNet-100} \\
\cmidrule(lr){2-6} \cmidrule(lr){7-11} \cmidrule(lr){12-16}
& Cons. $\uparrow$ & $H_{s}$ $\downarrow$ & $H_{m}$ $\downarrow$ & $H_{f}$ $\downarrow$ & Act. & Cons. $\uparrow$ & $H_{s}$ $\downarrow$ & $H_{m}$ $\downarrow$ & $H_{f}$ $\downarrow$ & Act. & Cons. $\uparrow$ & $H_{s}$ $\downarrow$ & $H_{m}$ $\downarrow$ & $H_{f}$ $\downarrow$ & Act. \\
\midrule
Soft Gating        & 51.84 & 1.09 & 1.12 & 1.40 & 0.78 & 8.54 & 3.32 & 3.34 & 3.86 & 0.70 & 33.15 & \textbf{2.03} & 2.76 & \textbf{2.26} & 0.48 \\

w/o Grad. Detach & 56.22 & 1.01 & 1.14 & 1.27 & 0.87 & 20.17 & 2.79 & \textbf{3.14} & 3.18 & 0.82 & 22.29 & 2.66 & 2.69 & 2.96 & 0.30 \\

BayesNCL               & \textbf{56.50} & \textbf{0.99} & \textbf{1.12} & \textbf{1.25} & 0.89 & \textbf{22.02} & \textbf{2.70} & 3.20 & \textbf{3.09} & 0.87 & \textbf{36.14} & 2.10 & \textbf{2.44} & 2.37 & 0.50 \\
\bottomrule
\end{tabular}
\end{small}
\end{center}
\end{table*}

\begin{table*}[htbp]
\caption{Comparing the Impact of the Complexities of Gating Head. $\uparrow$ indicates higher is better, $\downarrow$ indicates lower is better. Act. represents the activation ratio of the feature dimension.}
\label{tab:interp layer}
\begin{center}
\begin{small}
\setlength{\tabcolsep}{1.8pt} % 跨双栏可以稍微放宽列间距
\begin{tabular}{l ccccc ccccc ccccc}
\toprule
\multirow{2}{*}[-0.5ex]{Method} & \multicolumn{5}{c}{CIFAR-10} & \multicolumn{5}{c}{CIFAR-100} & \multicolumn{5}{c}{ImageNet-100} \\
\cmidrule(lr){2-6} \cmidrule(lr){7-11} \cmidrule(lr){12-16}
& Cons. $\uparrow$ & $H_{s}$ $\downarrow$ & $H_{m}$ $\downarrow$ & $H_{f}$ $\downarrow$ & Act. & Cons. $\uparrow$ & $H_{s}$ $\downarrow$ & $H_{m}$ $\downarrow$ & $H_{f}$ $\downarrow$ & Act. & Cons. $\uparrow$ & $H_{s}$ $\downarrow$ & $H_{m}$ $\downarrow$ & $H_{f}$ $\downarrow$ & Act. \\
\midrule
1-Layer & \textbf{56.97} & \textbf{0.98} & 1.15 & \textbf{1.22} & 0.78 & \textbf{26.70} & \textbf{2.41} & \textbf{2.89} & \textbf{2.75} & 0.81 & \textbf{40.85} & 2.13 & \textbf{2.37} & 2.38 & 0.35 \\

2-Layer & 56.50 & 0.99 & \textbf{1.12} & 1.25 & 0.89 & 22.02 & 2.70 & 3.20 & 3.09 & 0.87 & 36.14 & \textbf{2.10} & 2.44 & \textbf{2.37} & 0.50 \\

3-Layer & 55.26 & 1.03 & 1.13 & 1.29 & 0.88 & 17.88 & 2.92 & 3.30 & 3.33 & 0.89 & 26.37 & 2.59 & 2.72 & 2.83 & 0.14 \\
\bottomrule
\end{tabular}
\end{small}
\end{center}
\end{table*}

\begin{table*}[htbp]
\centering
\caption{\textbf{Effect of feature dimensionality $K$ on retrieval performance.} We report Precision@$k$ for different datasets. For CIFAR-10, ($k_1, k_2$) = (1, 3); for CIFAR-100 and ImageNet-100, ($k_1, k_2$) = (5, 10). BayesNCL consistently outperforms NCL across various sparsity levels, demonstrating its robustness in selecting discriminative features while suppressing common background noise.}
\label{tab:feature_selection}
\setlength{\tabcolsep}{10pt} % 增加列间距，更美观
\begin{tabular}{ll cccc}
\toprule
\multirow{2.5}{*}{\textbf{Dataset}} & \multirow{2.5}{*}{\shortstack[l]{\textbf{Selected} \\ \textbf{Dimensions}}}  & \multicolumn{2}{c}{\textbf{NCL (Baseline)}} & \multicolumn{2}{c}{\textbf{BayesNCL (Ours)}} \\
\cmidrule(lr){3-4} \cmidrule(lr){5-6}
& & \textbf{P@$k_1$} & \textbf{P@$k_2$} & \textbf{P@$k_1$} & \textbf{P@$k_2$} \\
\midrule
\multirow{5}{*}{\shortstack[l]{CIFAR-10 \\ \small{($k_1$=1, $k_2$=3)}}} 
& K=16 & 32.71\% & 31.21\% & 38.50\% & 34.99\% \\
& K=32 & 46.72\% & 42.14\% & 49.51\% & 45.75\% \\
& K=48 & 52.75\% & 49.50\% & 59.42\% & 55.50\% \\
& K=64 & 61.67\% & 57.85\% & 63.59\% & 61.23\% \\
& K=80 & 66.63\% & 64.39\% & 68.85\% & 66.43\% \\
& K=96 & 70.73\% & 69.14\% & 71.94\% & 69.90\% \\
\midrule
\multirow{5}{*}{\shortstack[l]{CIFAR-100 \\ \small{($k_1$=5, $k_2$=10)}}} 
& K=16 & 4.84\%  & 4.80\%  & 4.67\%  & 4.53\%  \\
& K=32 & 8.64\%  & 7.89\%  & 9.04\%  & 8.46\%  \\
& K=48 & 12.00\% & 10.95\% & 13.06\% & 11.99\% \\
& K=64 & 14.54\% & 13.32\% & 16.20\% & 14.84\% \\
& K=80 & 16.91\% & 15.63\% & 19.41\% & 17.57\% \\
& K=96 & 19.78\% & 18.24\% & 22.81\% & 21.11\% \\
\midrule
\multirow{5}{*}{\shortstack[l]{ImageNet-100 \\ \small{($k_1$=5, $k_2$=10)}}} 
& K=224 & 11.68\% & 10.21\% & 11.92\% & 10.40\% \\
& K=256 & 12.64\% & 11.04\% & 13.13\% & 11.32\% \\
& K=288 & 13.38\% & 11.86\% & 13.51\% & 11.90\% \\
& K=320 & 14.13\% & 12.34\% & 14.39\% & 12.56\% \\
& K=352 & 14.75\% & 12.85\% & 14.92\% & 13.04\% \\
& K=384 & 15.34\% & 13.23\% & 15.49\% & 13.43\% \\
\bottomrule
\end{tabular}
\end{table*}

\begin{table*}[htbp]
\centering
\caption{The impact of different dimensions of representation on interpretability}
\label{tab:dimensions}
\resizebox{\textwidth}{!}{ % 自动调整宽度以适应页面
\begin{tabular}{ll cccc cccc cccc}
\toprule
\multirow{2}{*}{Dataset} & \multirow{2}{*}{$\dim$} & \multicolumn{4}{c}{CL} & \multicolumn{4}{c}{NCL} & \multicolumn{4}{c}{BayesNCL} \\
\cmidrule(lr){3-6} \cmidrule(lr){7-10} \cmidrule(lr){11-14}
& & Cons. $\uparrow$ & $H_{s}$ $\downarrow$ & $H_{m}$ $\downarrow$ & $H_{f}$ $\downarrow$ & Cons. $\uparrow$ & $H_{s}$ $\downarrow$ & $H_{m}$ $\downarrow$ & $H_{f}$ $\downarrow$ & Cons. $\uparrow$ & $H_{s}$ $\downarrow$ & $H_{m}$ $\downarrow$ & $H_{f}$ $\downarrow$ \\
\midrule

% --- CIFAR-10 ---
\multirow{6}{*}{CIFAR-10} 
& 64   & 10.00 & 2.29 & 2.29 & 2.30 & 41.98 & 1.32 & 1.32 & 1.64 & 42.96 & 1.32 & 1.33 & 1.62 \\
& 128  & 10.00 & 2.29 & 2.29 & 2.30 & 49.97 & 1.17 & 1.17 & 1.47 & 50.62 & 1.13 & 1.20 & 1.43 \\
& 256  & 10.00 & 2.29 & 2.29 & 2.30 & 53.82 & 1.09 & 1.09 & 1.38 & 56.50 & 0.99 & 1.12 & 1.25 \\
& 512  & 10.00 & 2.29 & 2.29 & 2.30 & 57.62 & 1.00 & 1.00 & 1.27 & 58.10 & 0.99 & 1.02 & 1.25 \\
& 1024 & 10.00 & 2.29 & 2.29 & 2.30 & 61.88 & 0.90 & 0.91 & 1.16 & 62.81 & 0.88 & 0.93 & 1.13 \\
\midrule

% --- CIFAR-100 ---
\multirow{6}{*}{CIFAR-100} 
& 64   & 1.00 & 4.57 & 4.57 & 4.61 & 6.17  & 3.62 & 3.62 & 4.04 & 7.79  & 3.55 & 3.55 & 3.98 \\
& 128  & 1.00 & 4.57 & 4.57 & 4.61 & 8.43  & 3.39 & 3.41 & 3.87 & 12.28 & 3.13 & 3.25 & 3.55 \\
& 256  & 1.00 & 4.57 & 4.57 & 4.61 & 9.91  & 3.29 & 3.30 & 3.77 & 22.02 & 2.70 & 3.20 & 3.09 \\
& 512  & 1.00 & 4.57 & 4.57 & 4.61 & 13.36 & 3.05 & 3.06 & 3.56 & 25.89 & 2.52 & 3.21 & 2.86 \\
& 1024 & 1.00 & 4.57 & 4.57 & 4.61 & 15.72 & 2.89 & 2.92 & 3.40 & 20.06 & 2.77 & 3.02 & 3.25 \\
\midrule

% --- ImageNet-100 ---
\multirow{6}{*}{ImageNet-100} 
& 256  & 1.00 & 4.59 & 4.59 & 4.61 & 6.73 & 3.74 & 3.78 & 4.00 & 16.15 & 3.30 & 3.33 & 3.57 \\
& 512  & 1.00 & 4.59 & 4.59 & 4.61 & 9.74 & 3.57 & 3.72 & 3.76 & 22.81 & 2.78 & 2.80 & 2.98 \\
& 1024 & 1.00 & 4.59 & 4.59 & 4.61 & 12.11 & 3.38 & 3.61 & 3.46 & 22.31 & 2.98 & 3.05 & 3.24 \\
& 2048 & 1.00 & 4.59 & 4.59 & 4.61 & 14.93 & 3.28 & 3.55 & 3.26 & 36.14 & 2.10 & 2.44 & 2.37 \\
& 4096 & 1.00 & 4.59 & 4.59 & 4.61 & 17.58 & 3.04 & 3.46 & 2.89 & 24.92 & 2.25 & 2.26 & 2.57 \\
\bottomrule
\end{tabular}
}
\end{table*}

\begin{table}[htbp]
\centering
\caption{Comparing the impact of different prior probabilities $\rho$ on performance on the cifar100 dataset}
\label{tab:rho_sensitivity}
\begin{tabular}{lcccccc}
\toprule
$\rho$ & Cons & $H_s$ & $H_m$ & $H_f$ & Acc@1 & Acc@5 \\
\midrule
0.1 & 5.66 & 3.68 & 3.68 & 4.09 & 56.97 & 84.07 \\
0.2 & 6.03 & 3.65 & 3.65 & 4.07 & 57.37 & 84.90 \\
0.3 & 6.64 & 3.57 & 3.57 & 4.03 & 58.31 & 85.36 \\
0.4 & 7.13 & 3.52 & 3.52 & 3.99 & 59.15 & 85.19 \\
0.5 & 17.94 & 2.80 & 3.24 & 3.15 & 60.04 & 86.25 \\
0.6 & 20.66 & 2.67 & 3.27 & 3.05 & 60.02 & 85.93 \\
0.7 & 19.92 & 2.72 & 3.19 & 3.10 & 59.84 & 85.82 \\
0.8 & 22.02 & 2.70 & 3.20 & 3.09 & 60.73 & 86.62 \\
0.9 & 18.43 & 2.89 & 3.34 & 3.31 & 60.66 & 86.97 \\
\bottomrule
\end{tabular}
\end{table}

\begin{table}[htbp]
\centering
\caption{Compare the impact of different KL divergence loss weights $\lambda$ on performance on cifar100}
\label{tab:lambda_sensitivity}
\begin{tabular}{lcccccc}
\toprule
$\lambda$ & Cons & $H_s$ & $H_m$ & $H_f$ & Acc@1 & Acc@5 \\
\midrule
$1e-5$ & 19.89 & 2.80 & 3.27 & 3.17 & 59.74 & 85.84 \\
$2e-5$ & 22.49 & 2.70 & 3.20 & 3.08 & 60.20 & 86.21 \\
$3e-5$ & 22.02 & 2.70 & 3.20 & 3.09 & 60.73 & 86.62 \\
$4e-5$ & 18.52 & 2.88 & 3.30 & 3.30 & 61.11 & 86.46 \\
$5e-5$ & 14.64 & 3.09 & 3.33 & 3.53 & 60.88 & 86.43 \\
$6e-5$ & 12.07 & 3.21 & 3.33 & 3.67 & 60.94 & 86.28 \\
$7e-5$ & 10.52 & 3.26 & 3.33 & 3.73 & 60.66 & 86.64 \\
$8e-5$ & 11.06 & 3.24 & 3.29 & 3.72 & 61.61 & 87.03 \\
$9e-5$ & 11.36 & 3.23 & 3.29 & 3.71 & 61.53 & 86.64 \\
\bottomrule
\end{tabular}
\end{table}

\begin{table}[htbp]
\centering
\caption{Sensitivity analysis of the gating-head learning rate on CIFAR-100.}
\label{tab:gate_lr_sensitivity}
\begin{tabular}{lcccccc}
\toprule
LR & Cons. $\uparrow$ & $H_s$ $\downarrow$ & $H_m$ $\downarrow$ & $H_f$ $\downarrow$ & Acc@1 & Acc@5 \\
\midrule
0.04 & 15.75 & 3.01 & 3.27 & 3.43 & \textbf{60.92} & 86.45 \\
0.10 & 22.02 & 2.70 & 3.20 & 3.09 & 60.69 & \textbf{86.62} \\
0.20 & 20.83 & 2.69 & \textbf{2.97} & 3.10 & 60.08 & 85.96 \\
0.40 & \textbf{25.05} & \textbf{2.51} & 3.02 & \textbf{2.83} & 60.27 & 86.46 \\
\bottomrule
\end{tabular}
\end{table}

\section{Detailed Implementation Settings}
\label{app:implementation_details}

In this section, we elaborate on the specific hyperparameter configurations to ensure reproducibility. 

\textbf{General Optimization Settings.}
For CIFAR-10 and CIFAR-100, the models are trained for 200 epochs with a batch size of 256, and the learning rate for the backbone is set to 0.4. For the ImageNet-100 dataset, we train for 100 epochs with a batch size of 128, and the learning rate is set to 0.15.

\textbf{BayesNCL Specific Settings.}
Regarding the proposed method, we apply the generated binary masks to every feature vector element-wise to filter out task-irrelevant information. The KL-divergence weight $\lambda$ is fixed at $3 \times 10^{-5}$ across all datasets. The sparsity prior $\rho$ is set to 0.8 for CIFAR-10/100 and 0.6 for ImageNet-100. Furthermore, to stabilize the probabilistic estimation, the learning rate of the gating head is scaled by $0.25\times$ relative to the backbone's learning rate.

\section{Visualization of High-Entropy Dimensions in NCL}
\label{app:visualize ncl dim}

% 1
To verify the compositionality gap, we analyzed the feature space of a pre-trained NCL model. 
% 2
We calculated the entropy of each feature dimension across the validation set to identify ``high-prevalence" features.
% 3
Figures \ref{fig:ncl_dim_1694} through \ref{fig:ncl_dim_1631} visualize the images that maximally activate the top high-entropy dimensions.

% 1
\textbf{Observation:} These dimensions consistently capture non-semantic background textures, for example, grass  and water, rather than object semantics. 
% 2
This confirms that deterministic similarity measures used by NCL entangles context with content, causing the model to incorrectly align samples based on background similarity. 
% 3
BayesNCL aims to suppress these nuisance features via probabilistic gating.

\begin{figure*}[ht]
    \centering
    \includegraphics[width=0.85\linewidth]{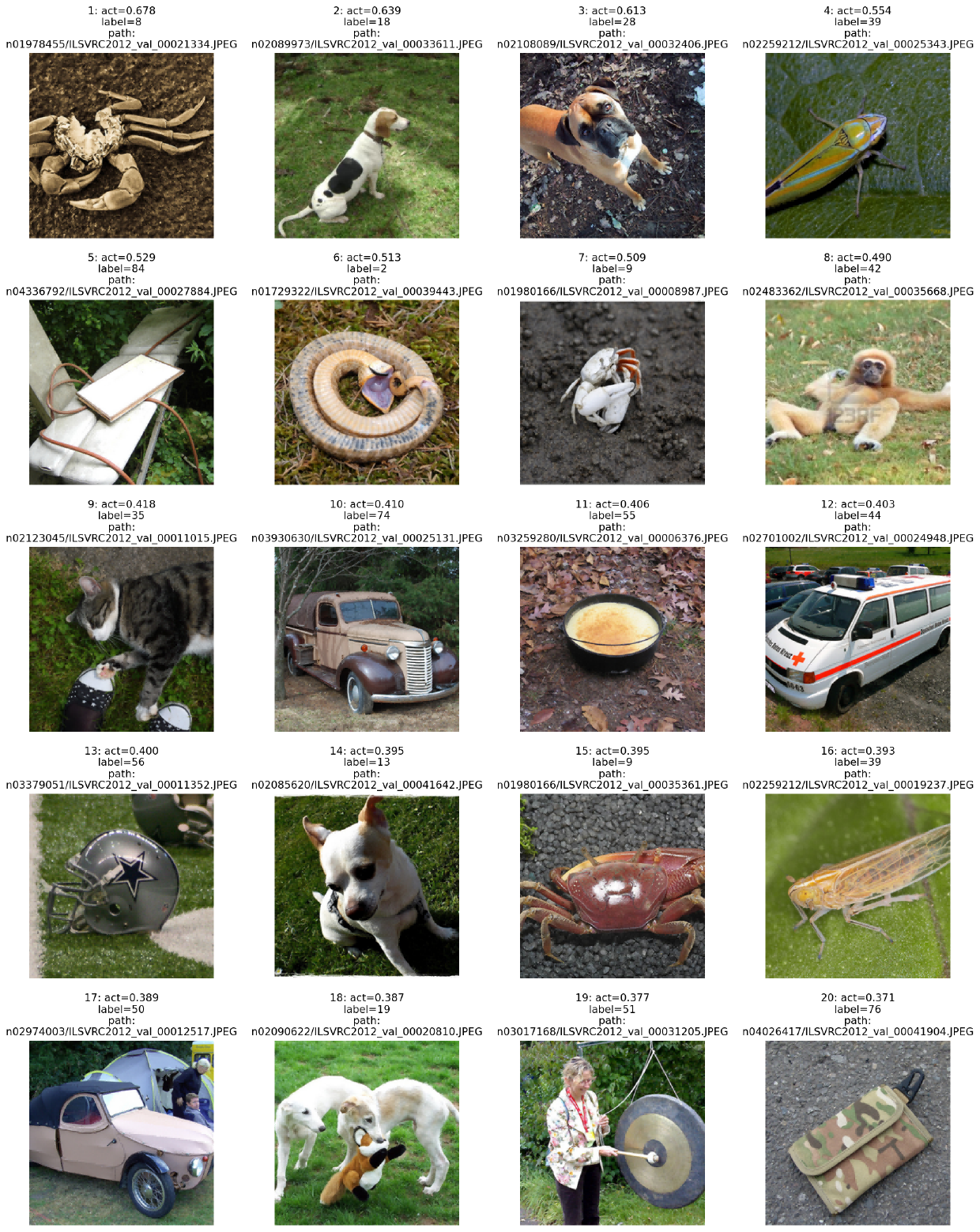} 
    \caption{\textbf{NCL Dimension 1694.} This dimension activates strongly on green/grassy terrain, regardless of whether the foreground object is a dog or an insect. In NCL, this feature causes semantically distinct classes to be tightly clustered simply because they share a similar environment, illustrating a failure to disentangle content from context.}
    \label{fig:ncl_dim_1694}
\end{figure*}

\begin{figure*}[ht]
    \centering
    \includegraphics[width=0.85\linewidth]{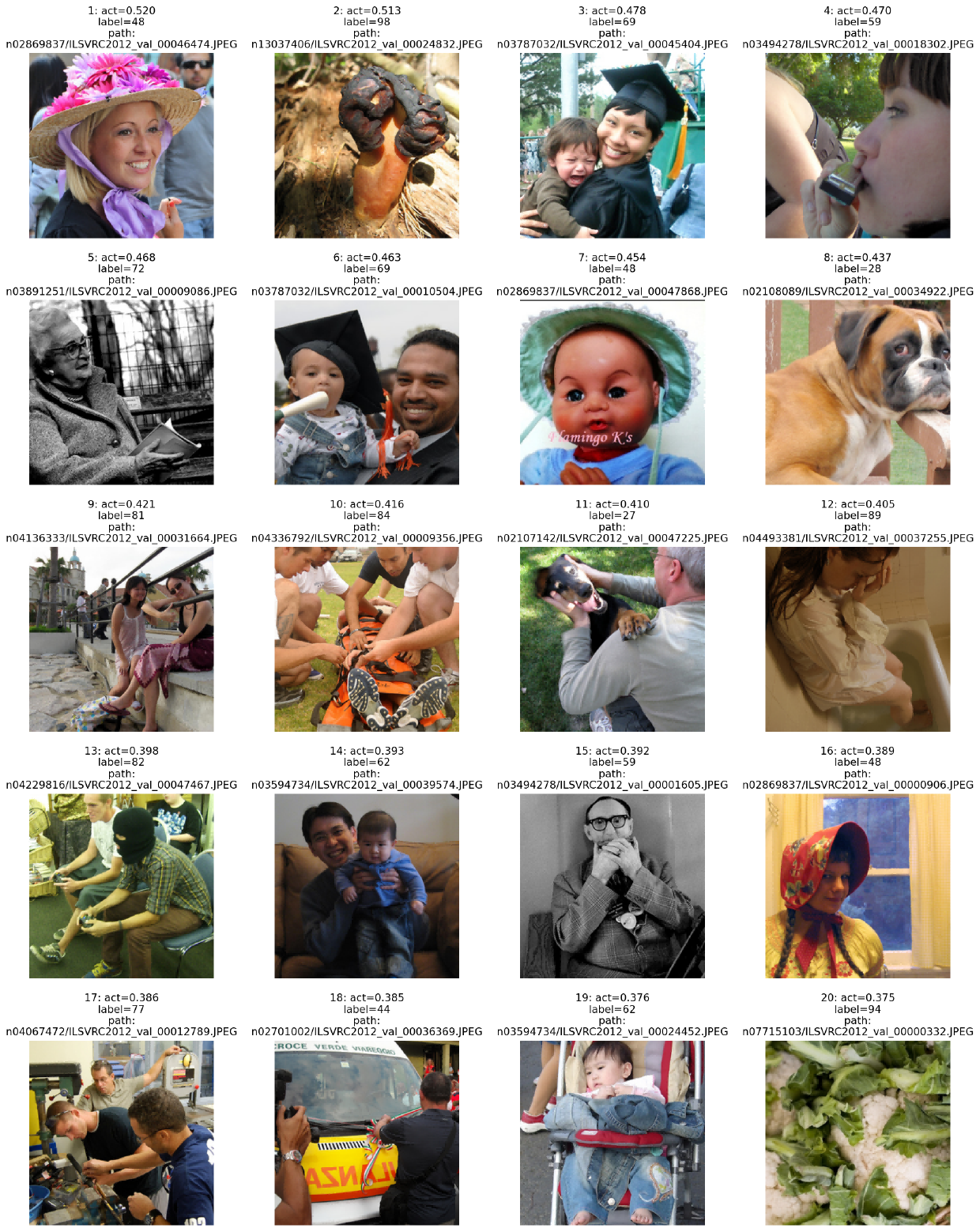} 
    \caption{\textbf{NCL Dimension 261.} This dimension captures human faces and bodies. In many ImageNet classes (e.g., musical instruments, tools), a person is present but is not the label. NCL incorrectly learns this as a dominant feature, causing the model to align images based on the presence of a human rather than the actual target object.}
    \label{fig:ncl_dim_261}
\end{figure*}

\begin{figure*}[ht]
    \centering
    \includegraphics[width=0.85\linewidth]{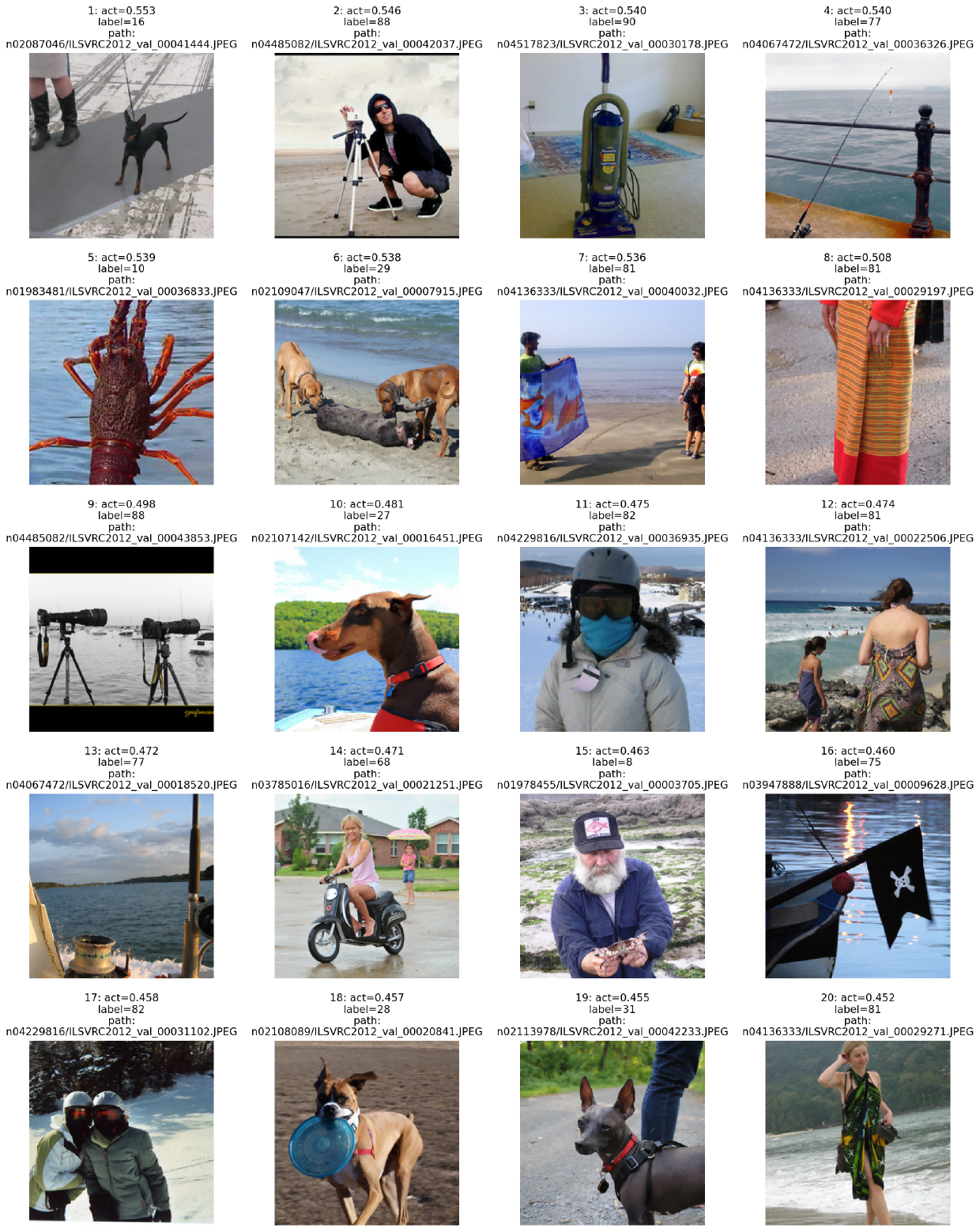} 
    \caption{\textbf{NCL Dimension 1631.} This dimension correlates strongly with \textbf{blue backgrounds} (water, sky, snow). This creates a "compositionality gap": a boat on water and a boat on land will be pushed apart in the embedding space because this dominant background feature is inconsistent, violating semantic consistency.}
    \label{fig:ncl_dim_1631}
\end{figure*}

\section{Visualization of Gated Dimensions in BayesNCL}
\label{app:visualize gated dim}

% 1
To validate that BayesNCL effectively resolves the optimization conflict inherent in deterministic nature, we analyze the semantic content of feature dimensions that are frequently suppressed by the gating mechanism. 
% 2
We identify dimensions where the gating probability is consistently high across the validation set. 
% 3
We then retrieve images that maximally activate these specific dimensions in the gated latent space $z$.
% 4
Figures \ref{fig:gated_dim_1408} through \ref{fig:gated_dim_1270} visualize the top activated samples for three such high-suppression dimensions.

% 1
\textbf{Observation:} These images always contain some similar features.
% 2
For example, Figure \ref{fig:gated_dim_1874} isolates geometric grid patterns, such as cages, window screens, and keyboards.
% 3
These results confirm our hypothesis regarding the compositionality gap. 
% 4
In the standard NCL framework, models can be very confused in these dimensions.
% 5
By probabilistically gating these nuisance factors, BayesNCL prevents spurious alignment, ensuring that the contrastive loss focuses on the object semantics rather than contextual confounders, resulting in more consistent representations.

\begin{figure*}[ht]
    \centering
    \includegraphics[width=0.85\linewidth]{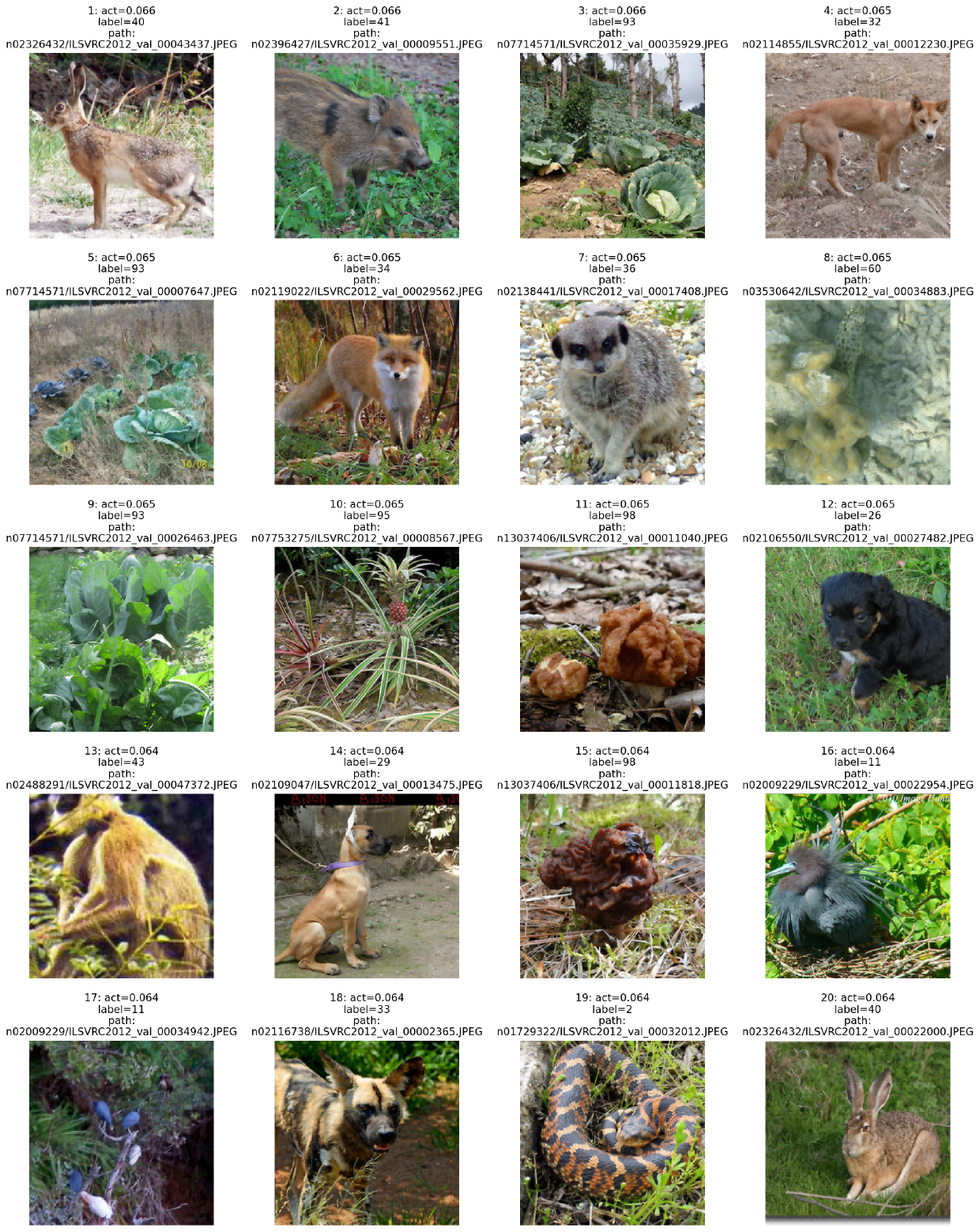} 
    \caption{\textbf{Gated Dimension 1408.} This dimension is highly activated by foliage and grass textures. Note that the foreground objects vary widely (wildlife, fungi, domestic pets), yet the background remains consistent. BayesNCL learns to gate this dimension to prevent the model from clustering diverse classes based solely on their environment.}
    \label{fig:gated_dim_1408}
\end{figure*}

\begin{figure*}[ht]
    \centering
    \includegraphics[width=0.85\linewidth]{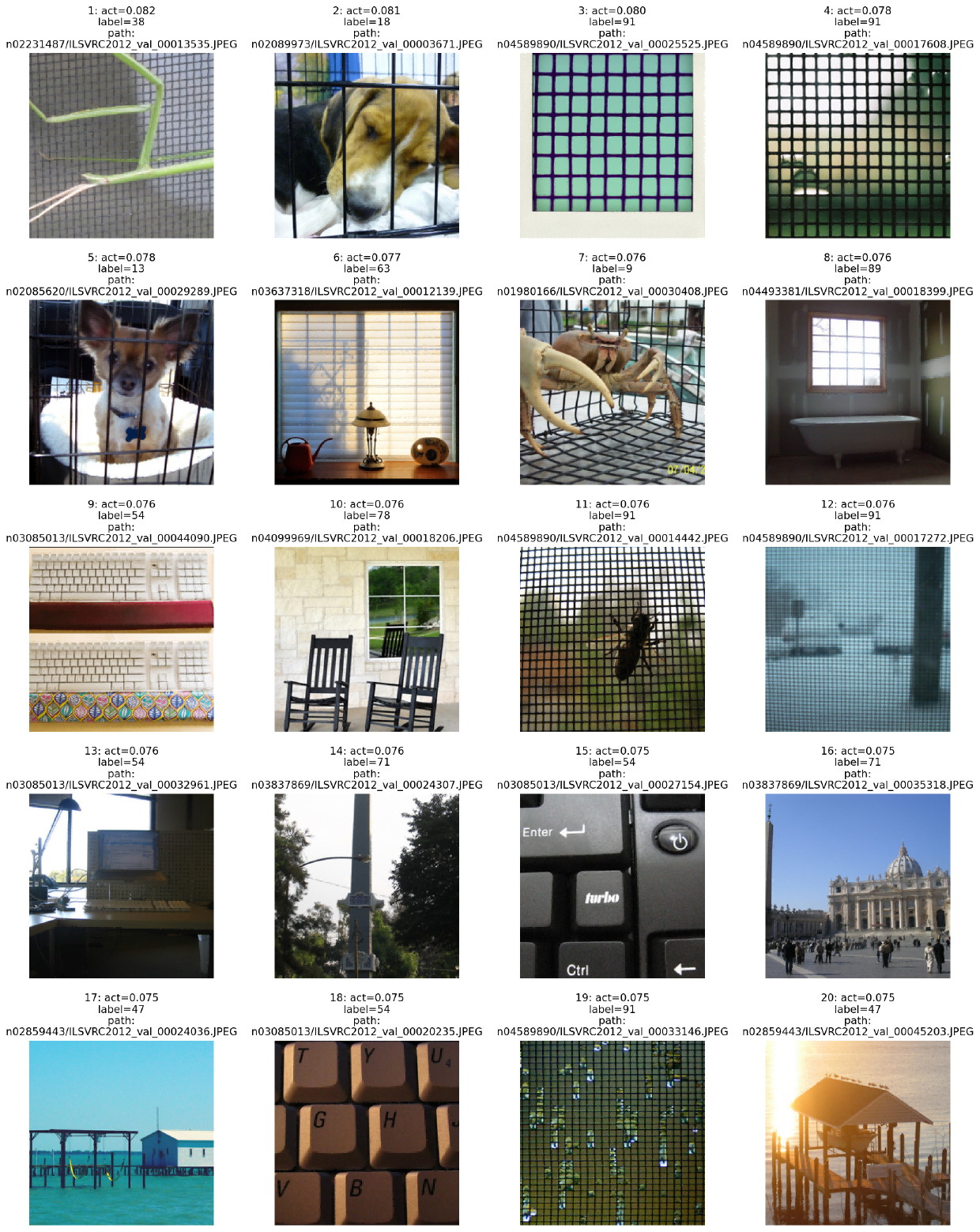} 
    \caption{\textbf{Gated Dimension 1874.} This dimension captures grid-like patterns, including wire fences, window blinds, and computer keyboards. These are high-frequency texture features that carry little class-specific semantic information. By suppressing this dimension, BayesNCL avoids entangling ``caged animals" with ``office equipment."}
    \label{fig:gated_dim_1874}
\end{figure*}

\begin{figure*}[ht]
    \centering
    \includegraphics[width=0.85\linewidth]{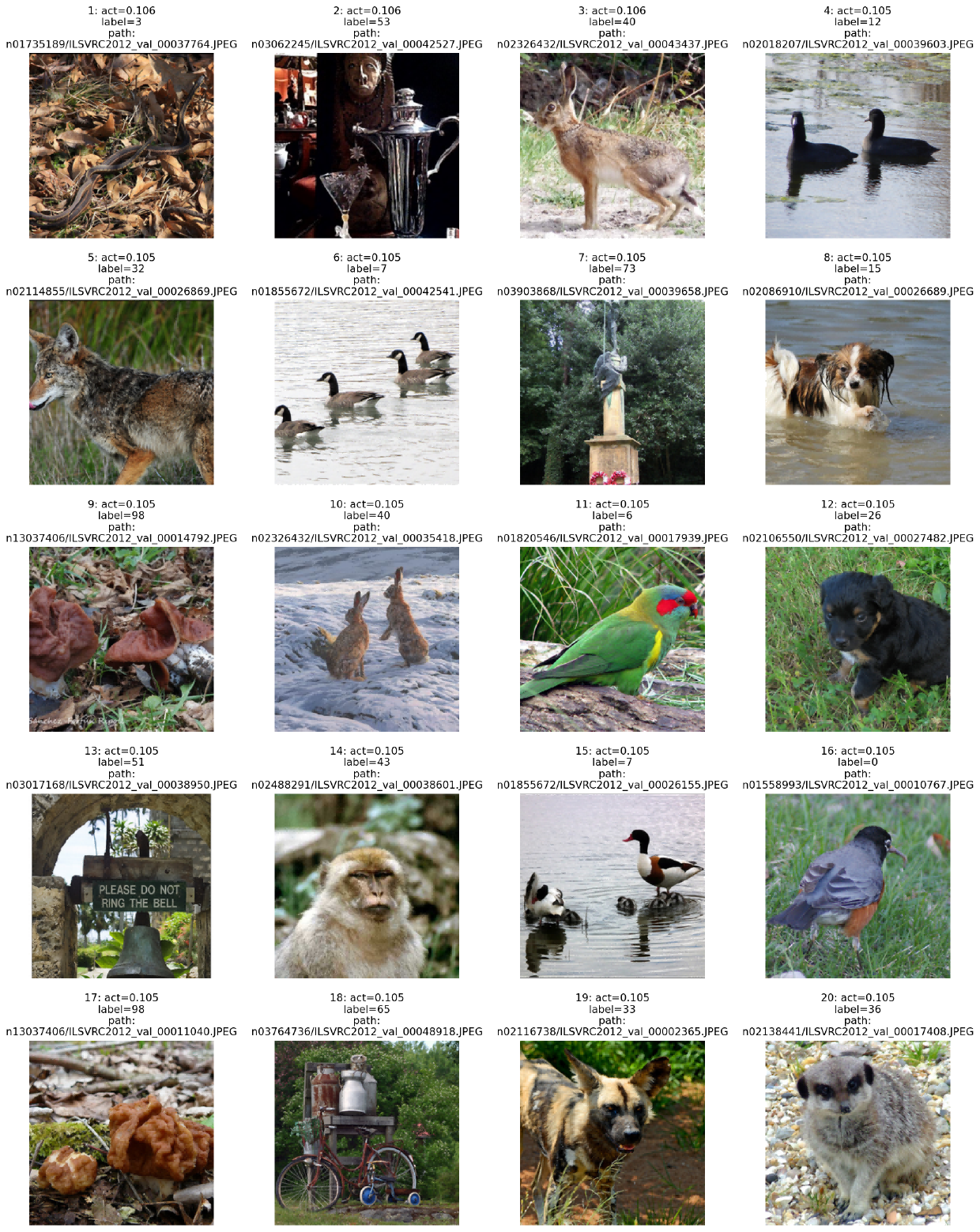} 
    \caption{\textbf{Gated Dimension 1270.} The images activating this dimension are dominated by water, mud, and riverbanks. The model identifies this as a nuisance feature common to waterfowl, aquatic mammals, and landscapes, gating it to prioritize the distinguishing features of the objects themselves.}
    \label{fig:gated_dim_1270}
\end{figure*}

\section{Diminishing Utility at Scale with Large Batches}
\label{sec:large_batch}

Inspired by an insightful question raised by an anonymous reviewer during the review process, we further investigate a totally intuitive hypothesis: \textit{Since the optimization conflict manifests as gradient variance, can simply scaling up the batch size smooth out this variance and solve the problem for standard NCL?}

To definitively answer this, we conduct experiments on CIFAR-100 by progressively scaling the batch size for standard NCL and comparing it with our BayesNCL. The impact of batch size scaling on Semantic Consistency is reported in Table~\ref{tab:batch_size_consistency}.

\begin{table}[htbp]
  \centering
  \caption{The impact of batch size scaling on Semantic Consistency (CIFAR-100). BayesNCL remains robust, while NCL shows diminishing returns.}
  \label{tab:batch_size_consistency}
  \begin{tabular}{lccccc}
    \toprule
    Method & BS=256 & BS=512 & BS=1024 & BS=2048 & BS=4096 \\
    \midrule
    NCL & 9.91 & 10.05 & 16.88 & 14.27 & 19.99 \\
    BayesNCL (Ours) & \textbf{22.02} & \textbf{19.90} & \textbf{23.49} & \textbf{22.75} & \textbf{22.25} \\
    \bottomrule
  \end{tabular}
\end{table}

% 1
Indeed, we find that larger batches do smooth NCL's gradient variance, leading to a modest improvement in semantic consistency. 
% 2
However, we argue that this brute-force scaling is not a ``free lunch'' and cannot replace the fundamental mechanism of BayesNCL, for the following critical reasons:

\textbf{1. Prohibitive Computational Cost.} Scaling the batch size leads to a linear explosion in computational complexity. As shown in Table~\ref{tab:batch_tradeoff}, a single forward pass of NCL at BS=4096 requires a massive 5.801 TFLOPs. In stark contrast, BayesNCL achieves superior consistency at the standard BS=256 setting with only 363.346 GFLOPs. Our probabilistic gating is orders of magnitude more efficient than simply averaging noise via large batches.

\textbf{2. Degradation of Downstream Generalization.} It is a well-documented phenomenon that extreme batch sizes can cause the optimizer to converge into sharp minima, thereby reducing the model's generalization ability. To verify this, we trained a linear probe using the NCL model pre-trained with BS=4096. As shown in Table~\ref{tab:batch_tradeoff}, the downstream accuracy of NCL (BS=4096) suffers a significant drop. Conversely, BayesNCL avoids these optimization pitfalls, maintaining strictly superior representation quality.

\begin{table}[htbp]
  \centering
  \caption{The Cost-Performance Trade-off. Comparing the ``Brute-force'' NCL scaling against the standard BayesNCL on CIFAR-100.}
  \label{tab:batch_tradeoff}
  \begin{tabular}{lccc}
    \toprule
    \textbf{Setting} & \textbf{FLOPs (per forward pass)} & \textbf{Acc@1 (\%)} & \textbf{Acc@5 (\%)} \\
    \midrule
    NCL (BS=4096) & 5.801T & 55.03 & 82.70 \\
    BayesNCL (BS=256) & \textbf{363.346G} & \textbf{60.69} & \textbf{86.62} \\
    \bottomrule
  \end{tabular}
\end{table}

\textbf{Conclusion.} We believe that the optimization conflict in NCL is inherent to its deterministic objective. Scaling the batch size merely averages the noise generated by a flawed objective. By contrast, BayesNCL structurally resolves the root cause of the conflict via probabilistic feature gating. The results confirm that our method is a fundamental and irreplaceable solution, rather than just a computationally cheap substitute for large batches.

\end{document}